\newtheorem{theorem}{Theorem}
\newtheorem{corollary}{Corollary}
\newcommand{\removelatexerror}{\let\@latex@error\@gobble}
\begin{document}
%
\title{Rank-R FNN: A Tensor-Based Learning Model for High-Order Data Classification}
%
%
%
%

\author{Konstantinos Makantasis, Alexandros Georgogiannis, Athanasios Voulodimos, Ioannis Georgoulas, Anastasios Doulamis, Nikolaos Doulamis
\IEEEcompsocitemizethanks{\IEEEcompsocthanksitem K. Makantasis is with the Institute of Digital Games, University of Malta, Msida, Malta \protect\\
E-mail: konstantinos.makantasis@um.edu.mt
\IEEEcompsocthanksitem A. Georgogiannis is with the School of Electrical and Computer Engineering, Technical University of Crete, Greece, and DeepLab, Athens, Greece, E-mail: alexgj@deeplab.ai
\IEEEcompsocthanksitem A. Voulodimos is with the Department of Informatics and Computer Engineering, University of West Attica, Greece, 
\protect\\ E-mail: avoulod@uniwa.gr
\IEEEcompsocthanksitem I. Georgoulas, A. Doulamis and N. Doulamis are with theSchool of Rural and Surveying Engineering, National Technical University of Athens, Greece, E-mail: \{adoulam, ndoulam\}@cs.ntua.gr}
}

\IEEEtitleabstractindextext{%
\begin{abstract}
An increasing number of emerging applications in data science and engineering are based on multidimensional and structurally rich data. The irregularities, however, of high-dimensional data often compromise the effectiveness of standard machine learning algorithms. We hereby propose the Rank-$R$ Feedforward Neural Network (FNN), a tensor-based nonlinear learning model that imposes Canonical/Polyadic decomposition on its parameters, thereby offering two core advantages compared to typical machine learning methods. First, it handles inputs as multilinear arrays, bypassing the need for vectorization, and can thus fully exploit the structural information along every data dimension. Moreover, the number of the model's trainable parameters is substantially reduced,  making it very efficient for small sample setting problems. We establish the universal approximation and learnability properties of Rank-$R$ FNN, and we validate its performance on real-world hyperspectral datasets. Experimental evaluations show that Rank-$R$ FNN is a computationally inexpensive alternative of ordinary FNN that achieves state-of-the-art performance on higher-order tensor data.
\end{abstract}

\begin{IEEEkeywords}
High-order data processing, Hyperspectral data classification, Rank-R FNN, Tensor-based neural networks
\end{IEEEkeywords}}

\maketitle

\IEEEdisplaynontitleabstractindextext

%

\section{Introduction}
\label{sec:introduction}
\PARstart{L}{arge} sets of high-order data have become ubiquitous across science and engineering disciplines, primarily due to recent advances in sensing technologies and increasingly affordable recording devices. Remote Sensing is not an exception, where large hyperspectral data --collections of high-order images-- are becoming available and used for a variety of applications, including urban and rural planning, change detection, mapping, geographic information systems, monitoring, housing value, and navigation \cite{doulamis20155d}, \cite{qin2021small}, \cite{lee2014applicability}.

High-order data is produced either when data itself is collected in a multi-linear format, or when low-order data is molded into high-order structures \cite{zadeh2017tensor,makantasis2015tunnel,nikitakis2019unified}. The information encoded in high-order data exhibits strong correlations across  different modes, i.e., the modes of matrices or tensors used  \cite{cichocki2017tensor}. Although such correlations favor data analysis techniques, the structural complexity of acquired information renders standard machine learning algorithms inadequate for its analysis \cite{zhou2016linked,camps2005kernel}. 

In particular, most machine learning algorithms assume that their input is in vector form. There exist, however, cases, such as image analysis, where vectorization of tensor input deteriorates the performance of standard data analysis algorithms since it destroys any inherent structural information that may be present in data.\cite{li2018tucker} (spatial and/or spectral coherency). Another drawback of  vectorization, without imposing additional structural constraints, is the production of large high-dimensional parameter spaces which compromise both computational efficiency and theoretical guarantees of vector-input machine learning methods \cite{zhou2013tensor}.

To overcome the problems related to vectorization, Convolutional Neural Networks (CNN) process multidimensional inputs without vectorizing them. Specifically, via a sequence of convolutions and nonlinear transformations, CNN map multidimensional inputs to vector representations used for classification purposes. Especially in hyperspectral image data classification, CNN prove to be very accurate and robust classifiers, as shown in \cite{makantasis2015deep}, one of the most influential works in remote sensing. However, the main drawback of CNN is the large number of trainable parameters and, consequently, the large number of training samples needed to achieve accurate classification performance.

Motivated by the limitations above, we propose an alternative learning paradigm in pattern recognition of high-order data. We introduce a tensor-based non-linear learning model, henceforth called Rank-R Feedforward Neural Network (FNN). The Rank-$R$ FNN imposes a Canonical/Polyadic (CP) decomposition of rank $R$ on its weights, which leads to a dramatic reduction of the number of parameters to be estimated during training. Consider for example a fully connected FNN with one hidden layer and $h$ hidden neurons that receives as input a 3-order tensor object $\bm X \in \mathbb R^{p_1 \times p_2 \times p_3}$. The cardinality of the weights set that connect the input to the hidden layer is $h \prod_{i=1}^{3} p_i$, while the cardinality of the corresponding set for the Rank-$R$ FNN is $h R \sum_{i=1}^{3} p_i$. Moreover, the Rank-$R$ FNN processes covariates in tensor format in an attempt to exploit, as much as possible, any structural richness presented in data and reveal any correlations residing across different tensor modes. To summarize, the main advantage of the proposed rank-$R$ FNN model is, first, the dramatic reduction of the number of model parameters and, second, the exploitation of the inputs' structural information. These two properties shield the proposed model against overfitting, making it ideal for accurate classification when a limited number of training examples are available.

Hyperspectral data classification is a typical small sample setting problem. Collecting large annotated hyperspectral corpora is a tedious and high-cost task since a group of human experts should visit the place depicted in a remotely sensed image and manually annotate the displayed materials\cite{protopapadakis2021stacked}. Therefore, this paper investigates the proposed Rank-$R$ FNN model's capacity to classify hyperspectral data accurately and compares it against typical machine learning schemes such as CNN.

\subsection{Related Work}
Several supervised and unsupervised learning methods have been proposed for analyzing data in tensor format, including High Order SVD, Tucker and CP decompositions, \cite{kolda2009tensor}, Multi-linear PCA \cite{lu2008mpca}, probabilistic decompositions \cite{chu2009probabilistic,rai2014scalable,xu2015bayesian}, and Common Mode Patterns \cite{makantasis2019common}. Such methods, known as subspace learning, project raw data into lower dimensional spaces and consider these projections as highly descriptive features of raw information. However, there is no consensus on what choice of features best summarizes a learning task \cite{li2018tucker}. In the supervised learning setup, subspace learning methods are often utilized as a preprocessing step \cite{makantasis2015deep}, but they come with one key limitation: they do not take into account the labels of the data and, as a consequence, they produce features with limited discrimination power for classification or regression tasks.

Tensor-based supervised learning methods for high-order data have also been proposed in \cite{tao2007general,tan2012logistic,zhou2013tensor,li2014multilinear,hoff2015multilinear}. These methods generate linear relations between the input and the desired output, and thus, they poorly handle complex input-output statistical relations that require nonlinear maps. Nonlinear tensor classification models, such as the Rank-1 FNN, were introduced recently in \cite{makantasis2018icassp}. Rank-1 FNN is a Fully Connected Feedforward Neural Network (FCFNN) whose weights satisfy a rank-1 CP decomposition~\cite{makantasis2018tensor}. Rank-1 FNN, however, comes with one drawback: the output of the first hidden layer can only represent features that lie within axis-aligned rectangles (for details see \cite{zhou2013tensor}). The current study overcome this drawback by not restricting the rank of the CP decomposition, which is imposed on the model's parameter, to be equal to one.

Kossaifi \textit{et al.} in \cite{kossaifi2017tensor}, propose a tensor-based neural network, which employs tensor contraction layers to propagate the information of tensor inputs through the layers of the network, and a tensor regression layer as the output layer. The model in \cite{kossaifi2017tensor} and the proposed Rank-$R$ FNN model may look similar, they are, however, essentially different. Specifically,the sequence of tensor contraction layers employed in \cite{kossaifi2017tensor} perform a sequence of nonlinear tensor projections, i.e., project a tensor object to another tensor subspace. Therefore,the model in \cite{kossaifi2017tensor} retains the tensor form of the information through all its layers. On the contrary, in the case of Rank-$R$ FNN, we project the tensor objects to a \textit{vector} space by imposing a CP decomposition of rank $R$ on the weights that connect the input to each one of the neurons of the first hidden layer. This way, we produce a compact yet highly informative representation of the inputs, and after the first hidden layer, we are able to propagate the information in a similar manner as in typical fully connected feed-forward neural networks.  

Besides the derivation of linear and nonlinear tensor-based learning models, the importance of tensor algebra tools is also emphasized via their exploitation towards the compression of very deep neural network architectures \cite{novikov2015tensorizing,lebedev2014speeding,garipov2016ultimate}, as well as towards the investigation of theoretical properties of deep learning machines \cite{khrulkov2017expressive,cohen2016expressive}. The studies dealing with the compression of deep learning architectures exploit tensor decompositions to reduce the number of the parameters of \textit{already trained} networks with minor accuracy drop. In contrast to these works, in this study we derive a learning model whose parameters are inherently compressed and this compression is retained during the training phase. In other words, the Rank-$R$ FNN is not derived by compressing another already trained model, but it is trained from scratch. On the other hand, the studies focusing on the theoretical properties of deep neural networks exploit tensor algebra tools to conclude about the expressive power of known architectures, such as convolutional and recurrent neural networks. The presented study is complementary to the studies mentioned above, since it focuses on the theoretical properties of the proposed Rank-$R$ FNN model. 

\subsection{Our Contribution}
The main contributions of this work are as follows. We extend Rank-1 FNN to Rank-$R$ FNN, a nonlinear classifier for tensor data which imposes a CP decomposition constraint of rank $R$ on its weights. Allowing weights to satisfy CP decompositions of rank higher than one increases the representation power of Rank-$R$ FNNs allowing them to model complex input-output statistical relations. In addition, this study significantly extends the work of \cite{makantasis2019hyperspectral} in two different directions. First, we investigate the theoretical properties of Rank-$R$ FNN models and prove, on the one hand, that they have universal approximation properties, and, on the other, that the class of functions they implement can be efficiently learned by the empirical risk minimization principle. Second, we experimentally investigate the robustness of Rank-$R$ FNN on noisy data, as well as its behavior with respect to different architecture design configurations. The observed performance of Rank-$R$ FNN on real-world high-order hyperspectral image datasets indicates that it  achieves state-of-the-art results on small sample setting problems, where the number of labeled examples is limited. Learning from a limited number of training examples is one of the most important properties of the proposed model, since in many real-world applications, such as in hyperspectral data classification, collecting large annotated corpora is a tedious and costly task. 

As we summarize in  Section \ref{sec:conclusions}, the advantages of Rank-$R$ model for classifying hyperspectral data are the following. (A) Our model requires times smaller number of trainable parameters which make it suitable for handling a small amount of training samples. (B) It presents robust  classification accuracy both for noise-free and noisy data inputs. (C) It converges very rapidly in contrast to CNN, which requires many epochs to reach a plateau, and (D) it is very robust against different execution runs in terms of converging to the best solution.  All these advantages have been revealed by applying the proposed Rank-$R$ FNN model for hyperspectral data classification over benchmarked datasets.

The remainder of this paper is structured as follows. Section 2  introduces the notation and states the problem formulation. Section 3 presents Rank-$R$ FNN, while Section 4 explores its theoretical properties. In Section 5, we experimentally evaluate the proposed model, and in the last section, Section 6, we conclude with a summary of findings.

\section{Nomenclature and Problem Formulation}
We hereby introduce the notation, definitions and tensor algebra operations to be used through out  this study. After that, we formulate the problem to be addressed.

\subsection{Tensor Algebra Notation}
The following definitions introduce basic operations pertaining to high-order tensor processing. We focus on the operations that are used through out this study. For an thorough introduction in tensor algebra refer to the excellent survey on  higher-order tensor decompositions in \cite{kolda2009tensor}. 
In what follows, tensors and vectors are denoted in bold uppercase and bold lowercase letters, respectively, while and scalars are denoted in lowercase letters.

\vspace{0.03in}
\noindent \textbf{Tensor vectorization}. The $\text{vec}(\bm B)$ operator stacks the entries of a $D$-order tensor $\bm B \in \mathbb R^{p_1 \times \cdots \times p_D}$ into a column vector. 
That is, entry $\bm B=[\cdots b_{i_1, \cdots, i_D}\cdots]$ maps to the $j^{th}$ entry of $\text{vec}(\bm B)$, in which $j=1+\sum_{d=1}^D(i_d-1)\prod_{d'=1}^{d-1}p_{d'}$.

\vspace{0.03in}

\noindent \textbf{Tensor inner product}. The inner product of two tensors $\bm A, \bm B \in \mathbb R^{p_1 \times \cdots \times p_D}$ with $D > 2$ is defined as
\begin{equation}
	\begin{split}
		\langle \bm A, \bm B \rangle &= \langle \text{vec}(\bm A), \text{vec}(\bm B) \rangle \\ &= \sum_{i_1=1}^{p_1} \sum_{i_2=1}^{p_2} \cdots \sum_{i_D=1}^{p_D}  a_{i_1 i_2\cdots i_D}  b_{i_1 i_2\cdots i_D},
	\end{split}
\end{equation}
where $i_1, \cdots, i_D$ are the indices of tensors' elements.

\vspace{0.03in}

\noindent \textbf{Tensor matricization}. The mode-\textit{d} matricization, $\bm B_{(d)}$, maps a tensor $\bm B$ into a $p_d \times \prod_{d' \neq d}p_{d'}$ matrix by arranging the mode-\textit{d} fibers to be the columns of the resulting matrix. That is, the $(i_1,\cdots,i_D)$ element of $\bm B$ maps to the $(i_d,j)$ element of $\bm B_{(d)}$, where $j=1+\sum_{d' \neq d}(i_{d'}-1)\prod_{d''<d',d'' \neq d}p_{d''}$.

\vspace{0.03in}
\noindent \textbf{Rank-\textit{R} CP decomposition}. A tensor $\bm B \in \mathbb R^{p_1 \times \cdots \times p_D}$ admits a rank-\textit{R} CP decomposition if  $\bm B = \sum_{r=1}^R \bm b_1^{(r)} \circ \cdots \circ \bm b_D^{(r)}$, where $\bm b_d^{(r)}\in \mathbb R^{p_d}$.
This decomposition is denoted as
$\bm B = [\![ \bm B_1, ... ,\bm B_D ]\!]$, where $\bm B_d = [\bm b_d^{(1)}, ... ,\bm b_d^{(R)}] \in \mathbb R^{p_d \times R}$. 
When a tensor $\bm B$ admits such a  decomposition, it holds true that
\begin{equation}
	\label{eq:1}
	\bm B_{(d)} = \bm B_d(\bm B_D \odot \cdots \odot \bm B_{d+1} \odot \bm B_{d-1} \odot \cdots \odot \bm B_1)^T,
\end{equation}

where $\odot$ is the Khatri-Rao product.

\vspace{0.03in}
\noindent \textbf{Vector tensorization}. The $\text{ten}(\bm b)$ operator transforms a vector $\bm b \in \mathbb R^{p_1 p_2 \cdots p_D}$ into a tensor $\bm B \in \mathbb R^{p_1 \times p_2 \times \cdots \times p_D}$, such that the element of $\bm B$ indexed by $i_1, i_2, \cdots i_D$ is the $i$-th element of $\bm b$, where $i=1 + \sum_{d=1}^D(i_d -1)\prod_{k=1}^{d-1}p_k$.

\subsection{Problem Formulation}
Let $\bm X \in \mathbb R^{I_1 \times \cdots \times I_D}$
a $D$-order random tensor
and $\bm X_i \in \mathbb R^{I_1 \times \cdots \times I_D}$
independent copies of $\bm X$ from the same probability distribution as $\bm X$.
Consider a family of learning models, each one parameterized by a set of parameters $\theta$.
The set $\theta$ constitutes a  complete description of a model from the family and contains all parameters defining  it.
We assume that there are $C$ available classes and we aim at classifying
$\bm X$ into one of these classes using the models at hand.

Let $\bm p(\bm X; \theta)$  denote a $C$-dimensional real vector whose
coordinates sum  to one and its $k$-th element,
$p^k(\bm X; \theta)$,
for $k=1,\dots, C,$ expresses the probability that $\bm X$ belongs to the $k$-th class.
Function $\bm p(\cdot;\theta)$, when regarded as a function of $\bm X$,
is an approximation of the conditional probability that $\bm X$ belongs to the $k$-th class.
The optimal value of $\theta$ is estimated by empirical risk minimization over the training dataset
\begin{equation}
	\label{eq:dataset}
	\mathcal D = \{(\bm X_i, \bm t_i)\}_{i=1}^N.
\end{equation}
Here  $\bm t_i = [t_{i,1}, \cdots, t_{i,C}]^T \in \{0,1\}^C$ is a unitary vector with $\sum_k t_{i,k}=1$ and indicates the class to which  $\bm X_i$ actually belongs\footnote{When $(\bm X_i, \bm t_i)$ is considered as a random tuple,
	the joint probability measure $P\{(\bm X_i , \bm t_i )\}$ admits the factorization
	$P\{X_i\}P\{t_i|X_i\}$.}.
In the following, whenever we omit subscript $i$ from a tensor,
we just refer to an input sample.
Having an estimation for parameter $\theta$,
our final decision for $\bm X$ is
\begin{equation}
	k_i^* = \arg \max_{k=1,\dots, C} \bm p^k(\bm X; \theta).
\end{equation}
The ultimate goal in classification is to calculate the set of model parameters $\theta$ where the minimum of
\begin{equation}
	\label{eq:objective}
	\sum_i \mathbb I(k_i^* \neq \arg \max_k \bm t_i) \rightarrow \text{min}
\end{equation}
is attained; $\mathbb I(\cdot)$ in (\ref{eq:objective}) stands for the indicator function and takes values in $\{0,1\}$.
This is the primary goal of any classification scheme. However, in our case $\bm X$ is a tensor, and so are many elements of the parameter set $\theta$.
Attaining the minimum in (\ref{eq:objective}) turns out to be an NP-hard problem (see also  Ch. 12 in \cite{shalev2014understanding}).
For that reason, we use instead the negative log-likelihood as a surrogate loss function to approximate the objective in (\ref{eq:objective}).
Next, we describe a classifier for tensors, whose input and model parameters retain their tensor form.

\section{Tensor-Based Rank-$R$ Nonlinear Classifier}
In this Section, we briefly introduce Rank-1 FNN and describe in detail its extension, the Rank-$R$ FNN classifier.

\subsection{Rank-1 FNN Modeling}
Rank-1 FNN is a two layer neural network  which models the  weights  connecting
input layer to  hidden layer as:
\begin{equation}
	\label{eq:rank_1_decomposition}
	\bm w^{(q)} = \bm w_D^{(q)} \circ \cdots \circ \bm w_1^{(q)}
	\in \mathbb R^{I_1 \times \cdots \times I_D},
\end{equation}
with $\bm w_d^{(q)} \in \mathbb R^{I_d}$,
$d=1,\cdots,D,$ denoting  weights which connect the input to the $q$-th neuron of the hidden layer.
The activation function $g(\cdot)$ of the $q$-th hidden neuron receives
$\langle \bm w^{(q)}, \bm X \rangle  $,
as input and outputs
\begin{equation}
	\begin{split}
		u_q &= g(\langle\bm w^{(q)}, \bm X \rangle)\\ &= g(\langle (\bm w_D^{(q)} \circ \cdots \circ \bm w_1^{(q)}), \bm X \rangle) \in \mathbb{R};
	\end{split}
\end{equation}
thus, the output of Rank-1 FNN is
\begin{equation}
	\label{eq:rank_1_output}
	p^{k} =\sigma ( \langle \bm v^{(k)},\bm u \rangle).
\end{equation} 
Here $\sigma(\cdot)$ denotes the softmax activation function, $\bm u = [u_1, u_2, \cdots, u_Q]^T$, $\bm v^{(k)} $ collects the weights between the hidden and the output layer, and superscript $k$ corresponds to the $k$-th output neuron (representing the $k$-th class in soft-max classification).
Although Rank-1 FNN looks similar to a conventional FCFNN, it is significantly different due to the constraint  in equation (\ref{eq:rank_1_decomposition}).
Constraint (\ref{eq:rank_1_decomposition}) considerably reduces the number of trainable parameters to $Q\sum_{d=1}^D I_d + QC$, whereas the number of parameters for the FCFNN is $Q\prod_{d=1}^D I_d + QC$. 

\subsection{Rank-$R$ FNN Modeling}
The strength of Rank-1 FNN lies in the  reduction of the number of trainable parameters compared to an ordinary FCFNN. Nevertheless, this reduction also affects its representation power, i.e., it limits its ability to model complex statistical relations between the input and the output variables\footnote{We will see in Section \ref{sec:theory} that Rank-$R$ FNNs are universal approximators, while Rank-1 FNNs are not.
	This is a crucial difference between the class of functions implemented by Rank-1  and Rank-$R$ FNNs.}.
To address this limitation, we
move to higher rank decompositions
and propose Rank-$R$ FNN,
a neural network whose weights, $\bm W^{(q)}$ connecting the input to the $q$-th neuron of the hidden layer, satisfy a rank-$R$ CP decomposition:
\begin{equation}
	\label{eq:rank_r_decomposition}
	\bm W^{(q)} = [\![\bm W_1^{(q)}, \cdots ,\bm W_D^{(q)} ]\!] \in \mathbb R^{I_1 \times \cdots \times I_D},
\end{equation}
or else
\begin{equation}
	\label{eq:rank_r_decomposition_2}
	\text{vec}(\bm W^{(q)}) = (\bm W_D^{(q)} \odot \cdots \odot \bm W_1^{(q)}) \bm 1_R \in \mathbb R^{\prod_{d=1}^{d=D}I_d},
\end{equation}
where $\bm 1_R$ stands for a vector with $R$ ones. 
The total number of weights of a Rank-$R$ FNN
is $RQ\sum_{d=1}^D I_d + QC$.
Under the CP decomposition constraint,
the output of the $q$-th hidden neuron becomes
\begin{equation}
	\label{eq:rank_r_weights}
	u_q = g(\langle \bm W^{(q)}, \bm X \rangle).
\end{equation}
Based on equations (\ref{eq:1}), (\ref{eq:rank_r_decomposition}) and (\ref{eq:rank_r_decomposition_2}),
it holds true that
\begin{equation}
	\label{eq:observation1}
	\langle \bm W^{(q)}, \bm X \rangle = 
	\text{diag}\langle \bm W_d^{(q)}, \bm Z_{\neq d}^{(q)} \rangle, 
\end{equation}
where 
\begin{equation}
	\label{eq:observation2}
	\begin{split}
		\bm Z_{\neq d}^{(q)} = 
		\bm X_{(d)}( \bm W_D^{(q)}\odot & \cdots \odot \bm W_{d+1}^{(q)}\odot \\ &\bm W_{d-1}^{(q)}\odot \cdots \odot \bm W_1^{(q)}).
	\end{split}
\end{equation}

Tensor $\bm X_{(d)}$ denotes the mode-$d$ matricization of tensor $\bm X$.  
In light of equation (\ref{eq:observation1}), the output of the $q$-th hidden neuron $u_q$ can be written as
\begin{equation}
	\label{eq:hidden_output}
	u_q = g(\langle \bm W^{(q)}, \bm X \rangle) =  \text{trace}\Big(g\big( (\bm W_d^{(q)})^T \bm Z_{\neq d}^{(q)}\big)\Big),
\end{equation}
while the output of Rank-$R$ FNN for the $k$-th class is given by (\ref{eq:rank_1_output}).
Note that  $\bm W_d^{(q)} \in \mathbb R^{I_d \times R}$ and $\bm Z_{\neq d}^{(q)} \in R^{I_d \times R}$, which implies that $(\bm W_d^{(q)})^T \bm Z_{\neq d}^{(q)}$ is a square matrix in $\mathbb{R}^{R \times R}$.

Matrix $\bm Z_{\neq d}^{(q)}$ is a transformation of
input $\bm X$ and is independent from $\bm W_d^{(q)}$.
Under  the previous notation,
it becomes clear that
Eq. (\ref{eq:hidden_output}) actually resembles the operation performed by a single perceptron with weights $\bm W_d^{(d)}$ and input matrix $\bm Z_{\neq d}^{(q)}$.
If the rank-$R$ canonically decomposed weights $\bm W_{d'}^{(q)}$,
$d' \neq d $ are known, then input matrices $\bm Z_{\neq d}^{(q)}$ are known too.
The previous observation underpins the derivation of
the optimization algorithm presented in the next subsection which estimates the weights of Rank-$R$ FNN models.

\begin{figure}[t]
	{\begingroup
		\removelatexerror
		\begin{algorithm}[H]
			\caption{Rank-$R$ FNN weights estimation}
			\label{alg:2}
			\SetAlgoLined
			\textbf{Initialization:}\\
			1. Set Iteration Index $n\rightarrow 0$\\
			2. Initialize all  weights $\bm W_d^{(q)}(n)$ and $\bm v^{(k)}(n)$ \\
			for $d=1,...,D$, $q=1,2,\cdots,Q$, $k=1,...,C$  \\
			3. \Repeat{termination criteria are met}{
				\For{$d=1,...,D$}{
					Consider $\bm W_{d'}^{(q)}$ fixed, for $d' \neq d, d' \in \{1, \dots, D\}$ \\
					\For{$q=1,...Q$}{
						3.1 Estimate  transformed input matrix $\bm Z_{\neq d}^{(q)}$ [see Eq. (\ref{eq:observation2})] \\
						3.2 Compute Rank-$R$ FNN output and loss [see relations (\ref{eq:hidden_output}) and (\ref{eq:neg_log_likelihood})] \\
						3.3 Update  weights $\bm W_d^{(q)}(n)$ towards  negative direction of $\partial L / \partial \bm W_d^{(q)}$ 
					}	
				}
				\For{$k=1,...,C$}{
					3.4 Update  weights $\bm v^{(k)}(n)$ towards  negative direction of $\partial E / \partial \bm v^{(k)}$
				}
				
				Set $n \rightarrow n+1$
			}
		\end{algorithm}
		\endgroup}
\end{figure}

\subsection{Estimation of Rank-$R$ FNN Weights}
Let us aggregate all weight parameters of Rank-$R$ FNN as 
\begin{equation}
	\label{eq:rank-r-weights}
	\bm V = \{\bm v^{(k)}\}_{k=1}^C  \:\:,
	\:\: \bm W_d = \{\bm W_d^{(q)}\}_{q=1}^Q,
\end{equation}
for $d=1,\cdots,D$. In (\ref{eq:rank-r-weights}) $\bm v^{(k)} $ collects the weights between the hidden and the output layer, and superscript $k$ corresponds to the $k$-th output neuron.
Given training data  $\mathcal D =\{(\bm X_i, \bm t_i)\}_{i=1}^N$
and sets $\{\bm W_d\}_1^D$, $\bm V$,
we use the negative log-likelihood  function
\begin{equation}
	\label{eq:neg_log_likelihood}
	\begin{split}
		L(\bm W_1,..., & \bm W_D, \bm V ;\mathcal D) = \\ & -\sum_{i=1}^N \sum_{k=1}^{C} t_{i,k}\log p^k(\bm X_i; \{\bm W_d\}_1^d, \bm V)
	\end{split}
\end{equation}
to asses the classification performance of the corresponding Rank-$R$ FNN model on training data.
Among all possible Rank-$R$ FNN models whose weights satisfy
the CP decomposition constraint in~(\ref{eq:rank_r_decomposition}),
we opt for that ones which achieve minimal
value $L(\bm W_1,...,\bm W_D, \bm V ;\mathcal D)$.

Equations (\ref{eq:observation2}) and (\ref{eq:hidden_output}) are crucial for the
derivation of an alternating optimization algorithm to minimize the objective function in (\ref{eq:neg_log_likelihood}). When matrices $\bm V$ and $\bm W_{d'}^{(q)}$,
$d' \neq d$,  are known, matrix $\bm Z_{\neq d}^{(q)}$ is reconstructed as in equation~(\ref{eq:observation2}),
which implies that the only unknown parameter to be estimated is the weight matrix $\bm W_{d}^{(q)}$.
We adopt a coordinate descent minimization scheme for the
minimization of ($\ref{eq:neg_log_likelihood})$
where in each step we keep $\bm V$ and $\bm W_{d'}^{(q)}$ fixed
and iterating over all $d\neq d'$, $d \in \{1,\dots, D\}$
we minimize with respect to $\bm W_{d}^{(q)}$.
The derivative
$\partial L / \partial \bm W_d^{(q)}$ can be computed by the backpropagation algorithm and the estimation of Rank-$R$ FNN weights is done with gradient descent steps,
see Algorithm \ref{alg:2}.

\section{Theoretical Properties of Rank-$R$ FNN}
\label{sec:theory}
In this Section, we prove learnability and
universal approximation properties for the class of functions implemented by Rank-$R$ FNN models.
For this purpose,
we reveal their connections to ordinary FCFNN models
and,
in Theorem \ref{thm:rank_fnn_theorem} below,
we construct a subjective mapping
between Rank-$R$ FNN  and ordinary FCFNNs.

\begin{theorem}[Rank-$R$ FNN Theorem]
	\label{thm:rank_fnn_theorem}
	Let $f: \mathbb R^{\prod_{d=1}^D p_d}\rightarrow \mathbb F$ be a two-layer fully connected FNN with $Q$ hidden neurons, that maps the vectorized form, $\text{vec}(\bm A)$, of a tensor object $\bm A \in \mathbb R^{p_1\times\cdots\times p_D}$ to $\mathbb F$. If $p_d < \infty$, $d=1,\cdots, D$, then there exists a Rank-$R$ FNN $g: \mathbb R^{p_1 \times \cdots \times p_D}\rightarrow \mathbb F$ with $Q$ hidden neurons that is equal to $f$.
\end{theorem}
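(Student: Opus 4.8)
\noindent\emph{Proof plan.} The plan is to reduce the claim to a single algebraic fact about tensor rank. The key observation is structural: in a two-layer network each hidden unit applies the fixed hidden activation to a linear functional of the (vectorized) input, and the output layer is then a fixed map of the hidden-unit values. For $f$ the $q$-th hidden functional is $\bm A \mapsto \langle \bm w^{(q)}, \text{vec}(\bm A)\rangle$ with $\bm w^{(q)} \in \mathbb R^{\prod_d p_d}$ \emph{unconstrained}, whereas for a Rank-$R$ FNN it is $\bm A \mapsto \langle \bm W^{(q)}, \bm A\rangle = \langle \text{vec}(\bm W^{(q)}),\text{vec}(\bm A)\rangle$ with $\bm W^{(q)}$ forced to admit a rank-$R$ CP decomposition (cf. (\ref{eq:rank_r_decomposition})--(\ref{eq:hidden_output}) and the definition of the tensor inner product). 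Hence the theorem reduces to: there is a finite $R$ such that every $\bm w^{(q)}$ equals $\text{vec}(\bm W^{(q)})$ for some tensor $\bm W^{(q)}$ of CP-rank at most $R$.

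First I would set $\bm W^{(q)} := \text{ten}(\bm w^{(q)}) \in \mathbb R^{p_1\times\cdots\times p_D}$ and expand it over the standard product basis,
\[
	\bm W^{(q)} \;=\; \sum_{i_1=1}^{p_1}\cdots\sum_{i_D=1}^{p_D} w^{(q)}_{i_1\cdots i_D}\; \bm e_{i_1}\circ\cdots\circ\bm e_{i_D},
\]
which is a CP decomposition with $\prod_{d=1}^D p_d$ rank-one terms. Since every $p_d$ is finite, so is $R := \prod_{d=1}^D p_d$, and $\mathrm{rank}(\bm W^{(q)}) \le R$ for all $q$ simultaneously. (Sharper bounds on the maximal tensor rank of $\mathbb R^{p_1\times\cdots\times p_D}$ exist and could be substituted, but only existence of a finite $R$ is needed.) Padding each decomposition with zero rank-one summands so that all of them use exactly $R$ terms, I gather the factors into matrices $\bm W_d^{(q)} \in \mathbb R^{p_d\times R}$ with $\bm W^{(q)} = [\![\bm W_1^{(q)},\dots,\bm W_D^{(q)}]\!]$, as in (\ref{eq:rank_r_decomposition}).

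Then I would let $g$ be the Rank-$R$ FNN with $Q$ hidden neurons whose input-to-hidden factor matrices are exactly the $\bm W_d^{(q)}$ just constructed, and whose output layer (the weights $\bm v^{(k)}$, the output nonlinearity, and any hidden biases the model class admits) is copied verbatim from $f$. Using $\langle \bm W^{(q)}, \bm A\rangle = \langle \text{vec}(\bm W^{(q)}),\text{vec}(\bm A)\rangle = \langle \bm w^{(q)},\text{vec}(\bm A)\rangle$, the $q$-th hidden unit of $g$ returns the same scalar $u_q$ as the $q$-th hidden unit of $f$ on every tensor $\bm A$; since the output layers coincide, $g(\bm A) = f(\text{vec}(\bm A))$ for all $\bm A \in \mathbb R^{p_1\times\cdots\times p_D}$, which is the assertion.

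The one substantive point is the existence of a single finite $R$ covering all $Q$ weight tensors at once, and this is precisely where the hypothesis $p_d<\infty$ is used; the standard-basis expansion settles it, if wastefully, so I expect no further obstacles beyond routine bookkeeping with the $\text{vec}$/$\text{ten}$ bijection. It is worth noting that the argument genuinely fails for Rank-$1$ FNNs, because a single rank-one tensor cannot realize an arbitrary weight vector, consistent with the remark that Rank-$1$ FNNs are not universal approximators.
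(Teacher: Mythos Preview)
Your proposal is correct and follows essentially the same route as the paper: both reduce the claim to the single algebraic fact that every tensor $\text{ten}(\bm w^{(q)})\in\mathbb R^{p_1\times\cdots\times p_D}$ has finite CP rank, then copy the output layer verbatim. The only cosmetic difference is the rank bound invoked---you use the trivial $R=\prod_d p_d$ from the standard-basis expansion (and note sharper bounds exist), while the paper quotes the tighter $R'\le\min_i\prod_{d\ne i}p_d$; since only existence of a finite $R$ is required, this is immaterial.
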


\begin{proof}
	See Appendix A.
\end{proof}

\noindent \textit{Remark 1}:
The Rank-$R$ Theorem holds both for regression and classification tasks. In case of regression tasks $\mathbb F \equiv \mathbb R$, whereas for classification tasks $\mathbb F$ is the set of available classes.

An immediate consequence of Rank-$R$ FNN Theorem
is Corollary~\ref{cor:univ_approx} below, 
which is based on the fact that two-layer sigmoid FCFNNs are universal approximators,
i.e.,
given any continuous function $h$ defined on a compact subset $\mathcal S$ of $\mathbb R^n$ and any $\epsilon>0$, there exists a two layer FCFNN implementing
a function that is within $\epsilon$ of $h$ at each point of $\mathcal S$ (see \cite{cybenko1989approximation}, \cite{hornik1990universal} or Exercise 20.1 in~\cite{shalev2014understanding}).

\begin{corollary}
	\label{cor:univ_approx}
	Rank-$R$ FNNs are universal approximators; given any continuous function $h$ defined on a compact subset $\mathcal S$ of $\mathbb R^n$, there is a Rank-$R$ FNN that implements a function which is arbitrarily close to $h$ at each point in $\mathcal S$.
\end{corollary}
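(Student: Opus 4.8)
The plan is to derive the corollary as an almost immediate consequence of Theorem~\ref{thm:rank_fnn_theorem} combined with the classical universal approximation property of two-layer sigmoid fully connected networks. First I would reconcile the domains: the continuous target $h$ is defined on a compact set $\mathcal S\subseteq\mathbb R^n$, whereas a Rank-$R$ FNN acts on tensors. Picking any factorization $n=\prod_{d=1}^D p_d$ (such a factorization always exists, e.g.\ $D=1$, $p_1=n$, or by padding with modes of size one), the operator $\text{ten}(\cdot)$ furnishes a bijection $\mathbb R^n\to\mathbb R^{p_1\times\cdots\times p_D}$; under it $\mathcal S$ is carried to a compact subset $\widetilde{\mathcal S}$ of the tensor space, and $h$ to a continuous function $\widetilde h$ on $\widetilde{\mathcal S}$ defined by $\widetilde h(\bm A)=h(\text{vec}(\bm A))$.

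Next, given any $\epsilon>0$, I would invoke the universal approximation theorem for two-layer sigmoid FCFNNs (\cite{cybenko1989approximation}, \cite{hornik1990universal}; see also Exercise~20.1 in~\cite{shalev2014understanding}) to obtain a two-layer FCFNN $f:\mathbb R^{\prod_{d=1}^D p_d}\to\mathbb F$ with some finite number $Q$ of hidden neurons such that $|f(\text{vec}(\bm A))-\widetilde h(\bm A)|<\epsilon$ for every $\bm A\in\widetilde{\mathcal S}$. Since every $p_d$ is finite, Theorem~\ref{thm:rank_fnn_theorem} applies to this $f$ and yields a Rank-$R$ FNN $g:\mathbb R^{p_1\times\cdots\times p_D}\to\mathbb F$ with the same $Q$ hidden neurons that is equal to $f$, i.e.\ $g(\bm A)=f(\text{vec}(\bm A))$ for all $\bm A$. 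Composing with $\text{ten}(\cdot)$, the map $\bm x\mapsto g(\text{ten}(\bm x))$ is implemented by a Rank-$R$ FNN and satisfies $|g(\text{ten}(\bm x))-h(\bm x)|<\epsilon$ at every $\bm x\in\mathcal S$, which is precisely the asserted approximation property; note that since Theorem~\ref{thm:rank_fnn_theorem} provides exact equality, no further error is incurred beyond the $\epsilon$ already available from the FCFNN.

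The argument is short because all the substantive work is packed into Theorem~\ref{thm:rank_fnn_theorem}; there is no real obstacle, only two points needing care: (i) the bookkeeping that matches the vector domain of $h$ with the tensor domain of the Rank-$R$ model, and (ii) checking that the hypothesis $p_d<\infty$ of Theorem~\ref{thm:rank_fnn_theorem} is met, which is automatic once $n$ is finite. I would additionally check that the equality in Theorem~\ref{thm:rank_fnn_theorem} is compatible with taking $f$ to have the sigmoid activation the classical theorem uses, so that the activation function survives the passage from $f$ to $g$ unchanged. If one prefers a statement phrased entirely in terms of vector inputs, it suffices to identify the Rank-$R$ FNN with its tensorized version throughout, leaving every estimate unchanged.
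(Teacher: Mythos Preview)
Your proposal is correct and follows exactly the route the paper takes: the corollary is obtained as an immediate consequence of Theorem~\ref{thm:rank_fnn_theorem} together with the classical universal approximation theorem for two-layer sigmoid FCFNNs, and the paper gives no further argument beyond stating this. Your additional bookkeeping about identifying $\mathbb R^n$ with the tensor space via $\text{ten}(\cdot)$ is a reasonable elaboration that the paper leaves implicit.
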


Apart from the universal approximation property,
another consequence of Theorem~\ref{thm:rank_fnn_theorem} is that
the class of Rank-$R$ FNNs,
with fixed number of hidden neurons,
is a {\it learnable} class of functions.
Indeed,
for any  fixed $R<\infty$, all Rank-$R$ FNN models, with rank lower
that $R$, map to
ordinary FCFNNs with weights in $\mathbb{R}^{\prod_{d=1}^D p_d}$.
Since the class of FCFNNs with finite dimensional weights and fixed number of neurons has finite VC dimension,
the class of Rank-$R$ FNNs can be efficiently learned
by the empirical minimization principle in polynomial time,
see also   \cite{blumer1989learnability}, Theorem 20.4 in \cite{shalev2014understanding},
or Exercise 20.5 in~\cite{shalev2014understanding}. This lead us to Corollary~\ref{cor:learnability}.

\begin{corollary}
	\label{cor:learnability}
	The class of functions defined by Rank-$R$ FNN,
	with a fixed number of hidden neurons,
	has finite sample complexity and, thus, is learnable with the empirical risk minimization
	principle.
\end{corollary}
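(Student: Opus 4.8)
\noindent \textit{Proof sketch:}
The plan is to reduce the claim to the classical sample-complexity bound for fixed-architecture feedforward networks, using Theorem~\ref{thm:rank_fnn_theorem} to place every Rank-$R$ FNN inside a finite-dimensional class of ordinary FCFNNs. First I would fix the architecture: $D$ input modes of sizes $p_1,\dots,p_D$, $Q$ hidden neurons, $C$ output classes, and write $P=\prod_{d=1}^D p_d$, which is finite by hypothesis. Let $\mathcal H_R$ denote the class of classifiers $\bm X \mapsto \arg\max_{k} p^k(\bm X;\theta)$ realized by Rank-$R$ FNNs with this architecture, and let $\mathcal H_{\mathrm{FC}}$ denote the analogous class realized by two-layer FCFNNs acting on $\mathrm{vec}(\bm X)\in\mathbb R^{P}$ with $Q$ hidden neurons. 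A Rank-$R$ FNN is precisely an FCFNN whose input-to-hidden weight vectors are constrained to lie in the image of the Khatri--Rao map of~(\ref{eq:rank_r_decomposition_2}); therefore $\mathcal H_R \subseteq \mathcal H_{\mathrm{FC}}$ (and by Theorem~\ref{thm:rank_fnn_theorem} the two coincide once $R$ is large enough, though only the inclusion is needed here).

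Next I would invoke the standard fact, already recalled in the discussion above, that a feedforward network with a fixed architecture, finitely many weights, and a fixed regular activation (sigmoid, together with a softmax output) induces a hypothesis class of \emph{finite} combinatorial dimension --- finite VC dimension in the binary case, and finite Natarajan/graph dimension in the $C$-class case --- with the bound growing only polynomially in the number of weights $QP+QC$; see Theorem~20.4 in~\cite{shalev2014understanding} and~\cite{blumer1989learnability}. Since combinatorial dimension is monotone under passing to subclasses, $\mathcal H_R$ inherits a finite dimension from $\mathcal H_{\mathrm{FC}}$. The conclusion then follows from the fundamental theorem of statistical learning, in its multiclass form stated through the Natarajan dimension: a hypothesis class of finite dimension is agnostic PAC learnable and ERM is a successful learner, with sample complexity bounded by a polynomial in the dimension, $1/\epsilon$ and $\log(1/\delta)$. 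Hence $\mathcal H_R$ has finite sample complexity and is learnable by empirical risk minimization.

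The step I expect to require the most care is the passage to the multiclass setting. In contrast to the binary case, ``finite VC dimension'' must be replaced by finiteness of an appropriate multiclass dimension, and one must check that composing the $C$ network outputs with softmax and then $\arg\max$ does not destroy this finiteness: each decision region $\{\bm X : k^*=j\}$ is cut out by the finitely many inequalities $p^j(\bm X;\theta) > p^l(\bm X;\theta)$, whose left-hand sides are differences of sigmoid-network outputs and hence functions that are analytic (indeed Pfaffian) in the finitely many parameters, so the number of realizable labelling patterns on any fixed sample is finite and controlled by Warren/Karpinski--Macintyre-type bounds. This is exactly where the hypotheses ``$p_d<\infty$'' and ``fixed $Q$'' enter: they guarantee the parameter count $QP+QC$ is finite, which is what these bounds need. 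Finally, I would emphasize that the statement is purely statistical --- it concerns the sample complexity of ERM --- and is independent of whether the alternating scheme of Algorithm~\ref{alg:2} actually attains a global minimiser; the stronger, polynomial-\emph{time} learnability mentioned in the surrounding text would in addition invoke the finite-precision discretization argument of Exercise~20.5 in~\cite{shalev2014understanding}.
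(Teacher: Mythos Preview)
Your argument is correct and follows the same route as the paper: embed $\mathcal H_R$ into the class $\mathcal H_{\mathrm{FC}}$ of two-layer FCFNNs on $\mathbb R^{P}$, invoke the finite combinatorial dimension of the latter (Theorem~20.4 in~\cite{shalev2014understanding}), and conclude learnability by ERM. If anything, your treatment is more careful than the paper's one-paragraph sketch---you rightly note that only the inclusion $\mathcal H_R\subseteq\mathcal H_{\mathrm{FC}}$ is needed (not the full equivalence of Theorem~\ref{thm:rank_fnn_theorem}), you flag the multiclass-dimension issue that the paper silently elides under ``VC dimension,'' and you separate the statistical claim of the corollary from the stronger polynomial-time statement in the surrounding text.
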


Summarizing, for any FCFNN, with fixed number of hidden neurons, there exists a Rank-$R$ FNN with the same number of  neurons that reproduces exactly the same output. This form of equivalence implies that Rank-$R$ FNNs exhibit the universal approximation property when the number of hidden neurons grows unbounded. Finally, the class of functions implemented by Rank-$R$ FNNs of fixed rank, finite number hidden neurons,
and no cycles or loops in their graph, is learnable. Next we proceed with the experimental validation of Rank-$R$ FNNs.

\section{Experimental Validation}
In this Section, we evaluate the classification performance of Rank-$R$ FNN\footnote{The Rank-$R$ FNN code used in these experiments is available in Python at https://github.com/konstmakantasis/Rank-R-FNN} using hyperspectral imagery, which is a typical example of high-order data. Three widely known and publicly available datasets, captured by three different sensors, are used. In particular, we use i) the Indian Pines dataset, which has been captured by AVIRIS sensor and consists of 224 spectral bands and 10,249 labeled pixels assigned to 16 different classes, ii) the Pavia University dataset, which has been captured by ROSIS sensor and consists of 103 spectral bands and 42,776 labeled pixels assigned to 9 different classes, and iii) the Botswana dataset, which has been captured by Hyperion sensor and consists of 145 spectral bands and 3,248 labeled pixels assigned to 14 different classes. Figure \ref{fig:1} presents the employed datasets along with their ground truth.

We compare the performance of Rank-$R$ FNN against the 
CNN model of \cite{makantasis2015deep} for two main reasons.
First, the CNN of \cite{makantasis2015deep} is a benchmarking model for hyperspectral image classification. Second, it is a simple yet very efficient architecture, although it is not designed explicitly for hyperspectral data classification. In other words, the proposed Rank-$R$ models and the CNN mentioned above do not employ any specific design choices for exploiting the particular characteristics of hyperspectral data. Thus they can be used for any image pixel classification task.

At this point, we should highlight that Rank-$R$ FNN models can be straightforwardly applied on data represented as tensors of arbitrary order. For example, hyperspectral data that have been enhanced with mathematical morphology features \cite{jouni2019hyperspectral}  are usually represented as tensor objects of order larger than three. On the contrary, efficient processing of high-order data with sophisticated CNN models cannot be done in a straightforward manner since the high dimensionality of such data practically renders their application extremely inefficient due to the high computational cost of the convolution operation in more than three dimensions. 

We set the number of hidden neurons of both Rank-1 FNN and Rank-$R$ FNN to 75, while the employed CNN consists of two convolutional layers with 150 and 300 kernels, respectively, of dimension $3\times3$, and a fully connected layer with 75 hidden neurons. For all learning models we use the same training and testing datasets.  Moreover, we investigate the robustness of these models under different levels of white noise.

\begin{figure}[t]
	\begin{minipage}{\linewidth}
		\centering
		\centerline{\includegraphics[width=1.0\linewidth]{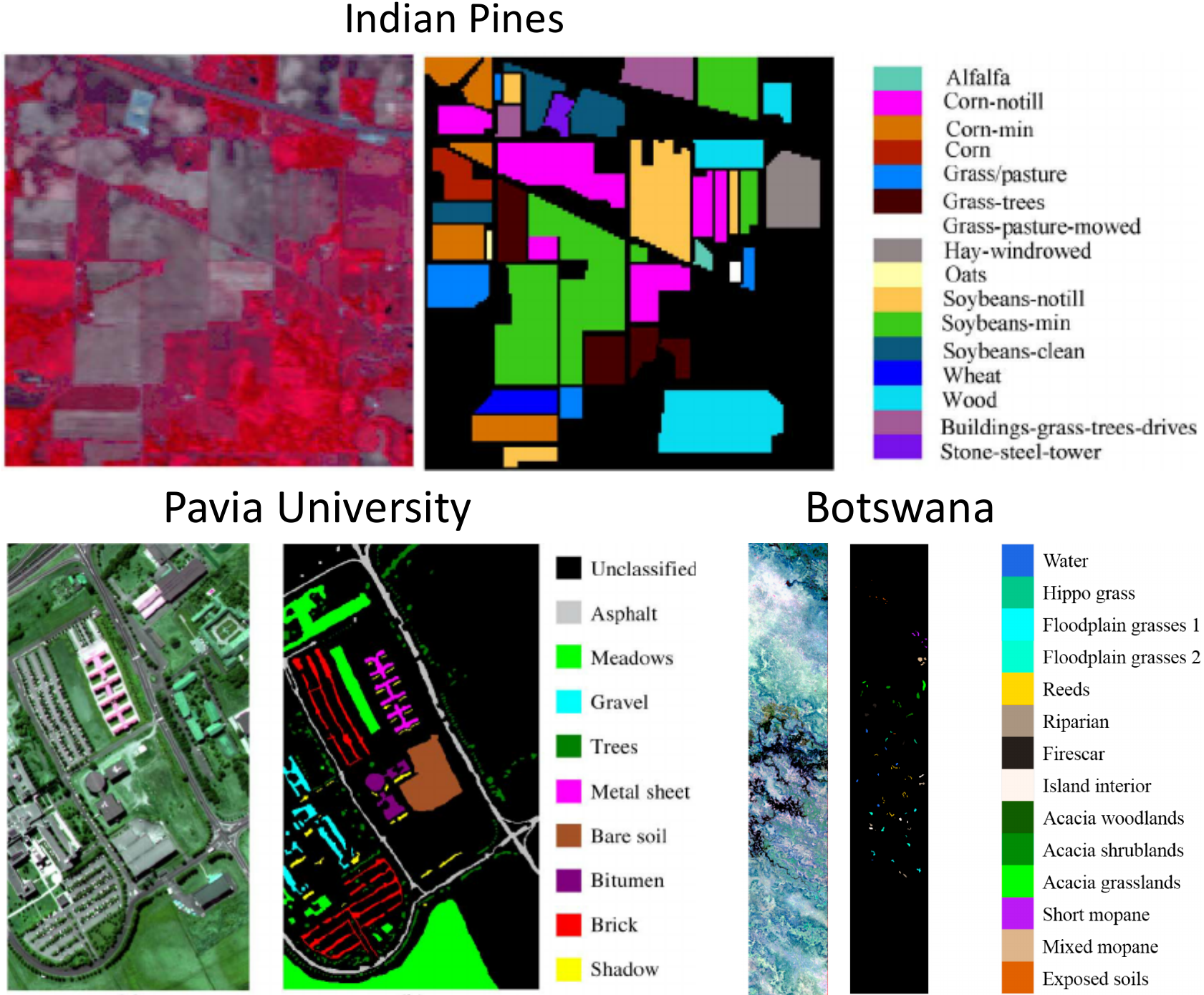}}
	\end{minipage} 
	\caption{The three hyperspectral datasets employed in the current study. Ground truth for each one of the datasets is also presented.}
	\label{fig:1}
\end{figure}

\subsection{Dataset Description}
A hyperspectral image is a 3-order tensor of dimensions $p_1\times p_2 \times p_3$, where $p_1$ and $p_2$ correspond to  height and width of the image, while $p_3$ corresponds to its spectral bands.
To conduct pixel-wise classification, i.e., to classify each pixel $I_{x,y}$ at location $(x,y)$ according to the material it depicts, we follow the approach proposed in \cite{makantasis2018tensor}. Specifically, it is assumed that the label of a square patch $\bm X_{x,y}$ of size $s \times s \times p_3$ centered at $(x,y)$ has the same label with pixel $I_{x,y}$. Denoting as $\bm t_{x,y}$ the ground truth label of $I_{x,y}$, we form the dataset $\mathcal D=\{(\bm X_{x,y}, \bm t_{x,y})\}$ for training and evaluation purposes. In all experiments we set parameter $s$ equal to 5. That way we exploit spatial information of pixels, and, at the same time, satisfy the assumption that, for the majority of pixels the square patch $\bm X_{x,y}$ has  same label as $I_{x,y}$ \cite{chen2014deep}. 

\begin{figure*}[t]
	\includegraphics[scale=0.55]{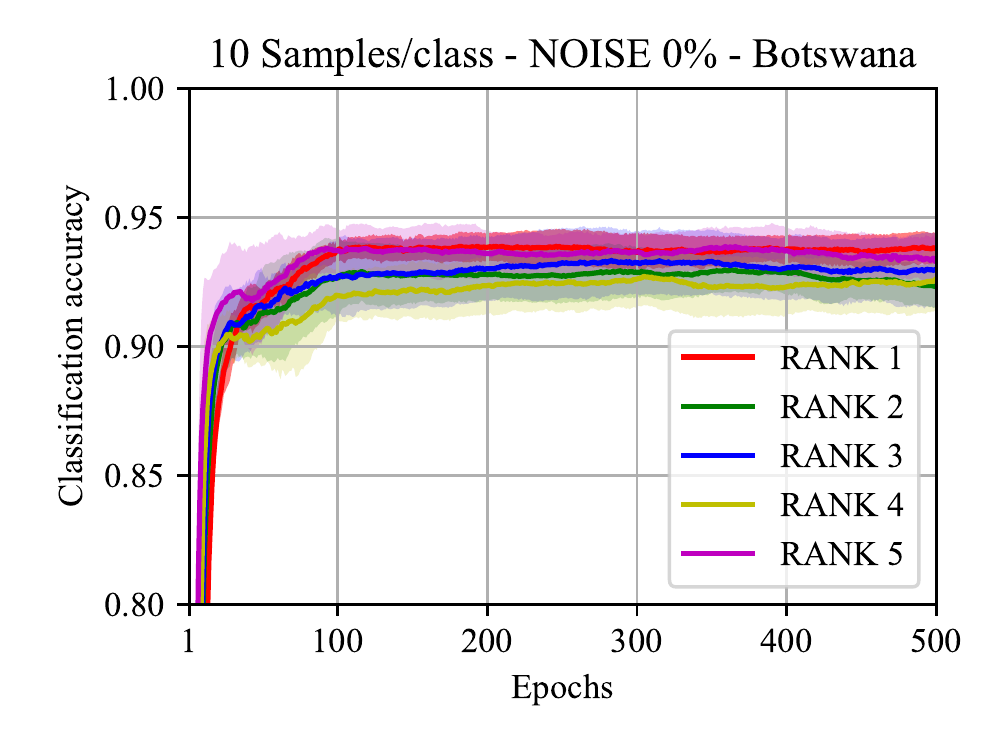} 
	\includegraphics[scale=0.55]{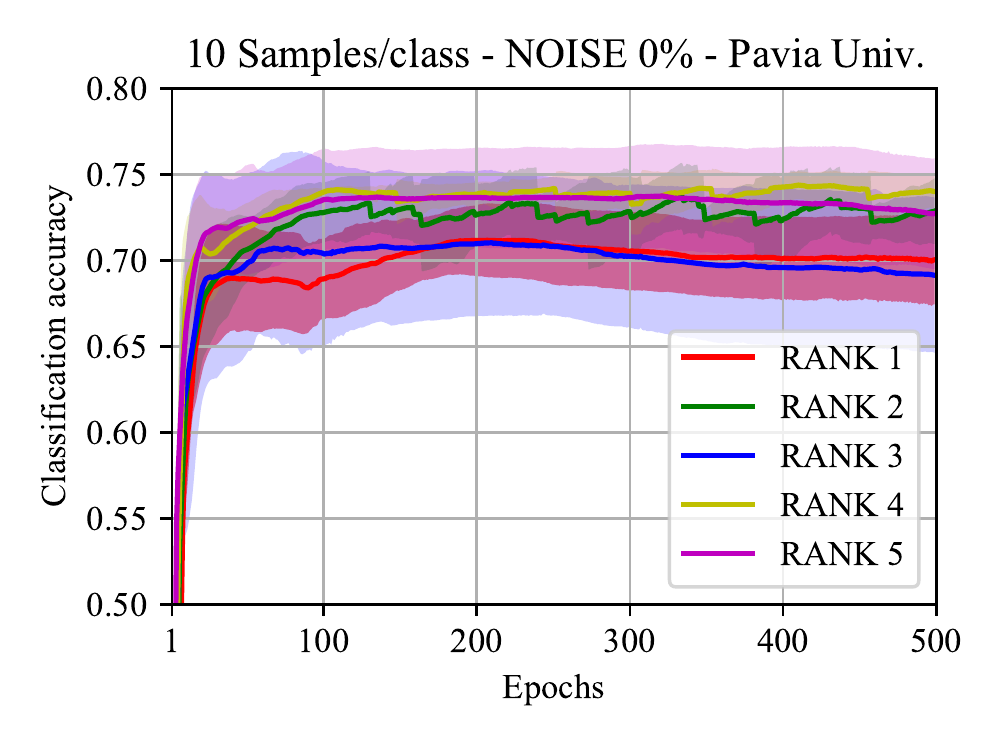} 
	\includegraphics[scale=0.55]{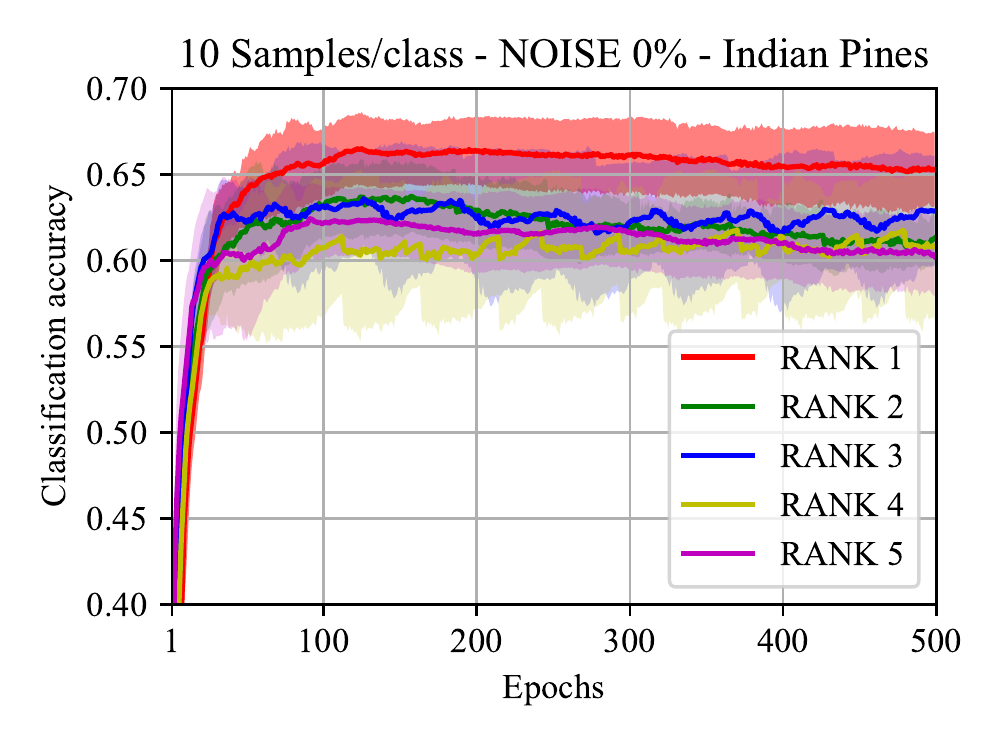}
	\caption{The classification accuracy and standard deviation on the test set of the proposed Rank-$R$ model versus different epochs.}
	\label{fig:RANK}
\end{figure*}
\begin{figure*}[t]
	\includegraphics[scale=0.55]{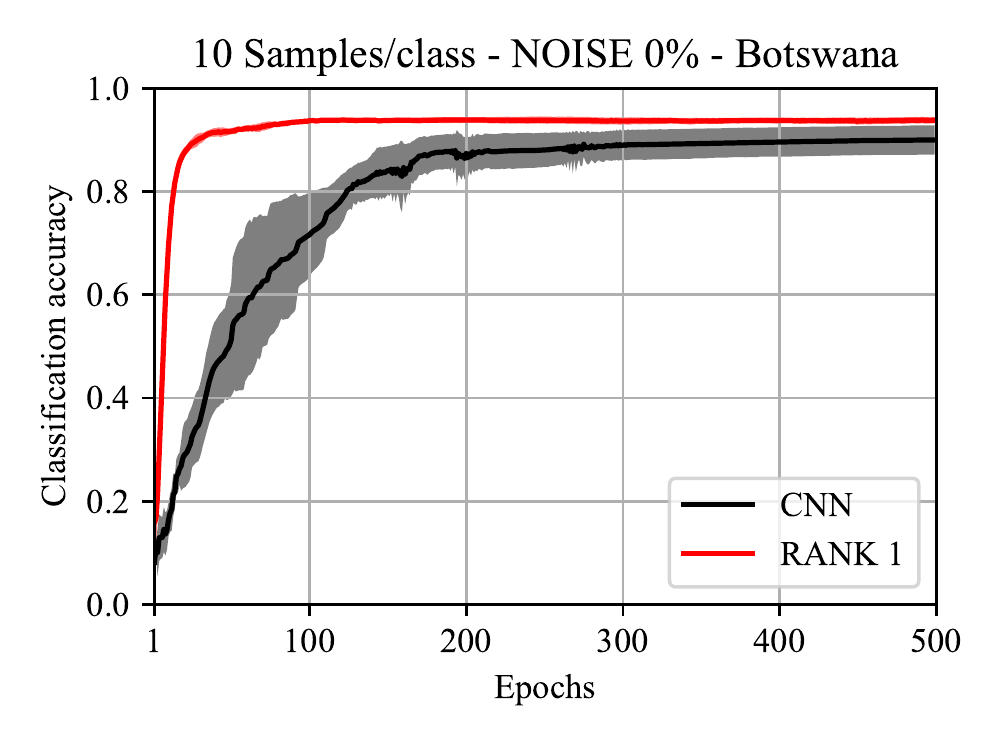} 
	\includegraphics[scale=0.55]{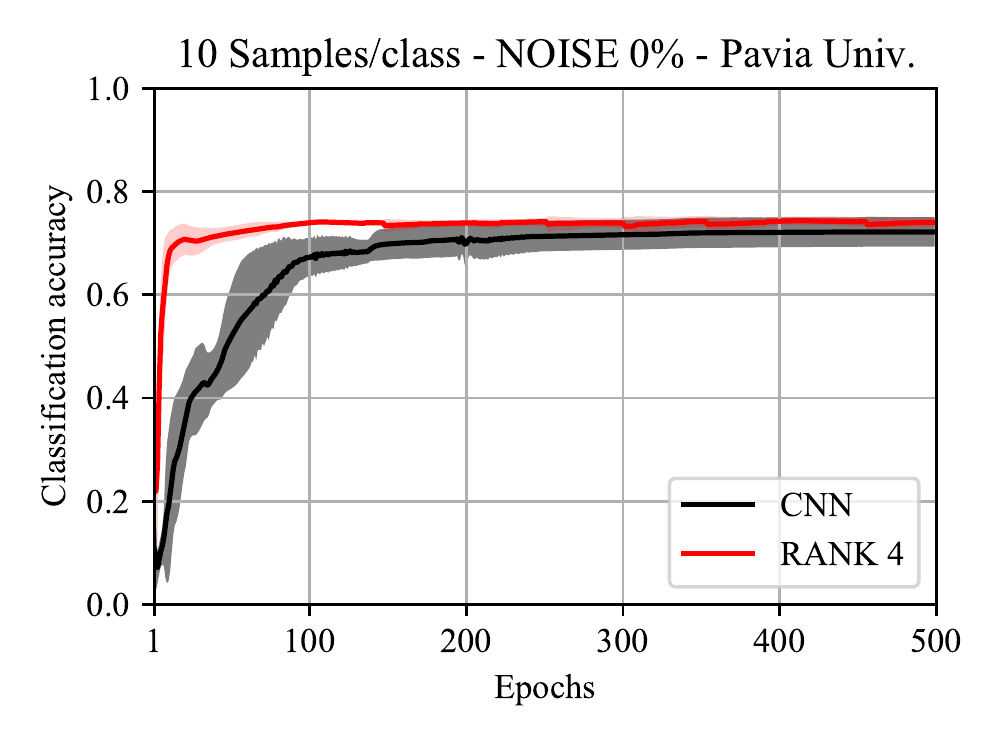} 
	\includegraphics[scale=0.55]{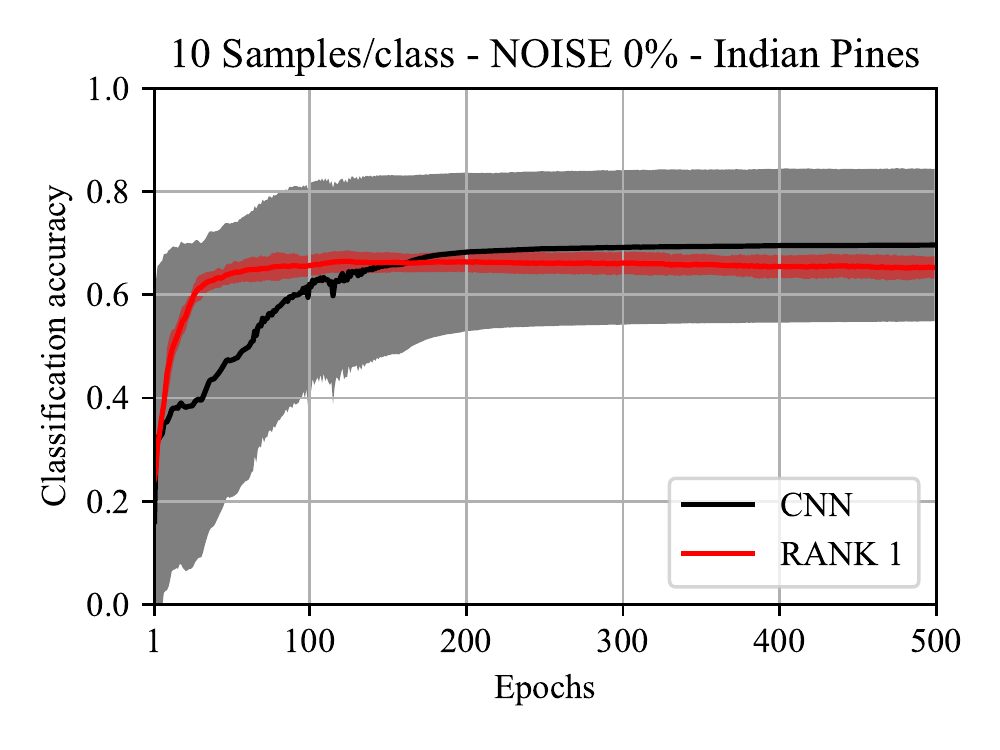}
	
	\includegraphics[scale=0.55]{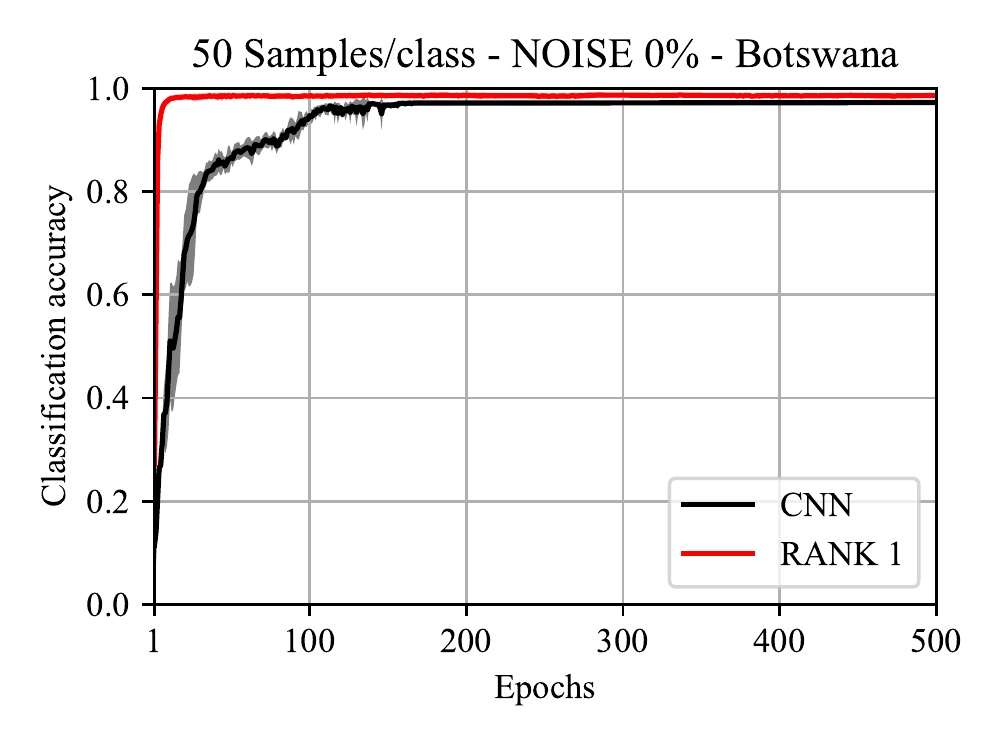} 
	\includegraphics[scale=0.55]{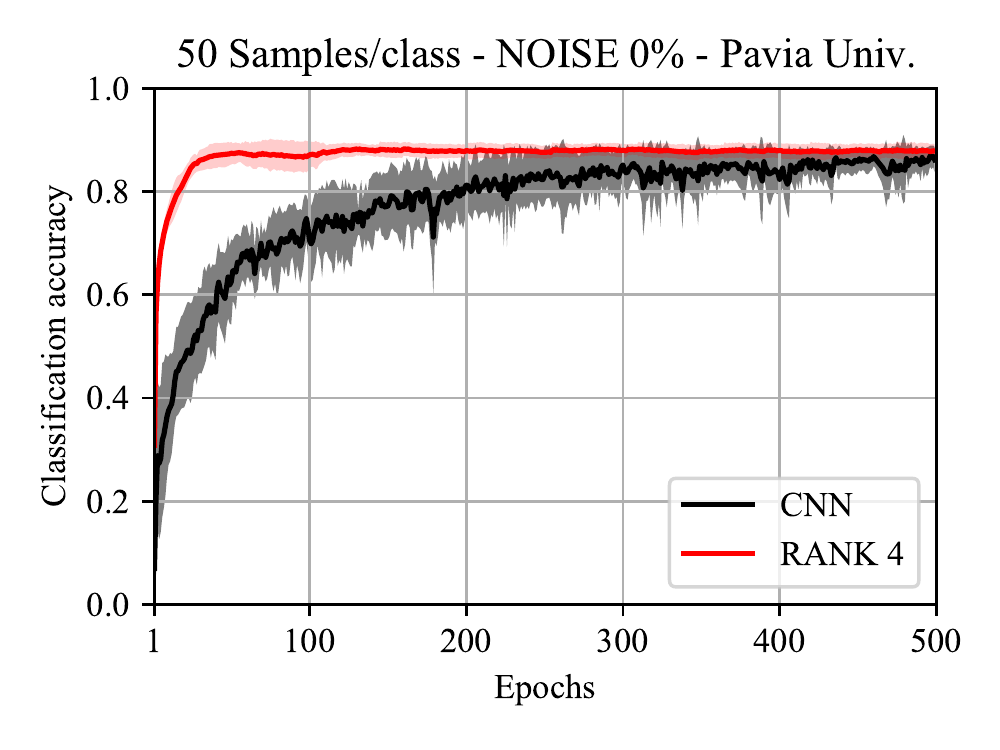} 
	\includegraphics[scale=0.55]{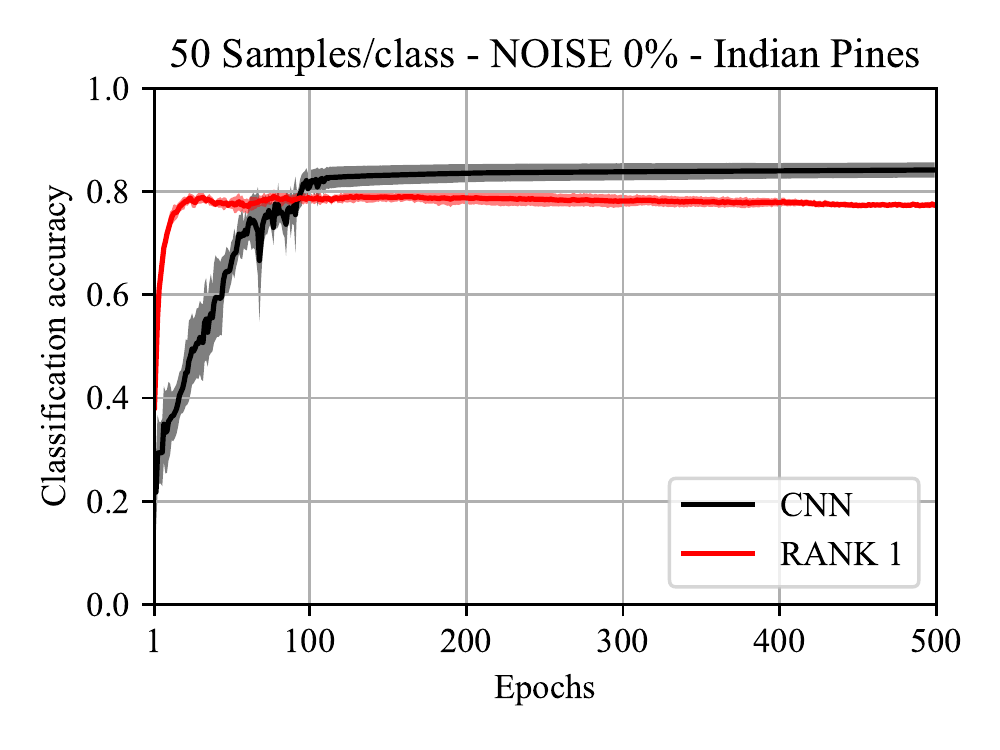} \
	\caption{Comparative performance as far as classification accuracy on test set is concerned between the proposed RANK-$R$ FNN model and the state of the art CNN based network of \cite{makantasis2015deep}. (Top) The case of $\alpha=10$ samples per training class for the Botswana, Pavia University and the Indian Pines dataset respectively. (Bottom) The case of $\alpha=50$ samples per training class for the Botswana, Pavia University and the Indian Pines dataset respectively. }
	\label{fig:RANK_CNN}
\end{figure*}

Although conventional deep learning models (such as the CNN model of \cite{makantasis2015deep}) can achieve almost perfect classification results for these datasets when the number of training samples is adequately high, we choose to train the models using a limited number of training samples in this study. This way, we focus on models' capacity to learn small sample setting classification tasks since employing a small number of training samples is a common limitation in many real-world applications such as hyperspectral image classification.

For this reason, in our experiments, we vary the number of samples per class used for training to evaluate the proposed tensor-based model's performance when the number of training data is limited. In particular, we randomly select a specific number of samples $\alpha$ per class for training, while the rest are used for testing purposes. In our case, the samples per class used for training are $\alpha=10$ and $50$. If some class includes fewer samples, we select a portion of $50\%$ randomly for training.

We also investigate the robustness of the models to the presence of noise. In particular, in our experiments, we have added a white noise level of 20\% in every band of each pixel. Due to the involved randomness during training/testing sets splitting, we conduct each experiment for ten runs and report the average classification accuracy and standard deviation on testing sets

In this study, we opt for the holdout cross-validation scheme, instead of the powerful K-fold cross-validation for the following reason. Although, K-fold cross-validation is a widely used scheme for evaluating the performance of learning models and approximating their true error, in our case it is not applicable. This is due to the fact that the classes are not equally represented in the datasets, and thus it is not possible to use K-fold cross-validation (even with different K for each dataset) to evaluate the performance of the models.

\subsection{Results}
In this section, we present the performance of the proposed Rank-$R$ FNN models and we compare them with the state-of-the-art CNN model of \cite{makantasis2015deep} on the employed datasets. In the first subsection, we present and discuss the performance of the different models when they are trained on noise-free data. In the second, we investigate the robustness of the same models by evaluating their performance of noisy data.

\subsubsection{Models' performance evaluation on noisy-free data}

For  Rank-$R$ FNN we used five different values for $R \in \{1, 2, 3, 4, 5\}$ and denote the respective classifiers as Rank-1 FNN, Rank-2 FNN, Rank-3 FNN, Rank-4 FNN, and Rank-5 FNN. 

Figure \ref{fig:RANK} illustrates the classification accuracy versus the number of training epochs of the proposed Rank-$R$ FNN classifiers over the three examined data sets; \textit{the Botswana, Pavia University and the India Pines datasets}. In this figure, we have also shown the standard deviation of the classification accuracy results obtained over the 10 different runs as an area of the same colour around the average line. As is observed, all the Rank-$R$ models converge quickly withing a few training epochs; less than 20 for all cases. The degree of the rank decomposition slightly affects the performance, and it is usually application dependent. In particular, for Botswana and Indian Pines datasets, the best performance is achieved for $R=1$. However, for Pavia University, the best performance is for $R=4$.

Figure \ref{fig:RANK_CNN} depicts comparisons between the proposed Rank-$R$ FNN model and the state-of-art CNN network of \cite{makantasis2015deep}. The results have been presented versus the number of training epochs and two different numbers of $\alpha$. In particular, Figure \ref{fig:RANK_CNN}(top) indicates the case for  $\alpha=10$ samples per class, while Figure \ref{fig:RANK_CNN}(bottom) for $\alpha=50$ samples per class. In this figure, $R=1$ for the Botswana and India Pines datasets and $R=4$ for the Pavia University since these values give the best classification accuracy (see Figure \ref{fig:RANK}). As is observed, in all cases, the proposed Rank-$R$ model converges more rapidly than the conventional CNN-based network of \cite{makantasis2015deep}. Besides, the proposed Rank-$R$ model presents a much smaller standard deviation of the average classification accuracy in all cases, indicating the robustness of our model against different execution runs. As the number of training samples decreases, the proposed Rank-$R$ FNN model's performance remains robust with minimal deviations from the average classification accuracy over the 10 different runs. Indeed, for smaller number of samples, better improvement is achieved by our proposed model compared to CNN. Besides, the CNN model's standard deviation increases, especially in the case of the Indian Pines dataset, as a small number of training samples per class is selected.  

Table \ref{table:1} presents the average classification accuracy and the respective standard deviation on the test sets over the three examined datasets. The results have been obtained for $R \in \{1, 2, 3, 4, 5\}$ and compared with the CNN model of \cite{makantasis2015deep}. In this table, we have depicted the results for 50 and 500 epochs, respectively.  For all models, we have selected  $\alpha =10$ samples per class. The latter is selected to indicate the performance of the proposed Rank-$R$ FNN model in case a few training samples are employed.

As is observed, in Botswana and Pavia University datasets, the proposed Rank-$R$ FNN model is about 3.8\% and 1.8\% respectively better in performance, while in the case of the Indian pines dataset is worse about 4.3\%. However, CNN's standard deviation is times larger than the standard deviation of the proposed Rank-$R$ FNN. In particular, for the Pavia University dataset, the standard deviation of our model is $1.8$ ( $1.02$ vs $2.85$) times smaller than of CNN, while for Botswana is $3.6$ ($0.6$ vs $2.81$) times smaller and for India pines of about $6.0$ ($2.12$ vs $14.8$) times smaller (Table. \ref{table:1}).

Table \ref{table:2} presents the number of trainable parameters in each model. Specifically, the CNN employs \textit{41, 48} and \textit{57 times} more parameters than the Rank-1 FNN for the Indian Pines, the Pavia University and the Botswana datasets, respectively. This means that our Rank-$R$ FNN requires times smaller number of parameters for learning the classification task.

\begin{table}[t]
	\centering
	\caption{Average classification accuracy and standard deviation (\%) on test set in case of no noise and 10 samples per class.}
	\newcolumntype{L}[1]{>{\hsize=#1\hsize\raggedright\arraybackslash}X}%
	\newcolumntype{C}[1]{>{\hsize=#1\hsize\centering\arraybackslash}X}%
	\label{table:1}
	
	\begin{tabularx}{\linewidth}{L{5.8}C{6.4}C{5.4}C{5.4}}
		\hline \hline 
		& Indian Pines & Botswana & Pavia Uni. \\ \hline \\
		EPOCHS=50\\ \\
		Rank-1 FNN   & $64.24 \pm 2.00$ & $ 91.73 \pm 0.61$ & $68.92   \pm 3.00 $\\ \hline
		Rank-2 FNN   & $62.02  \pm 3.25$ & $91.36  \pm 1.76$ & $70.66\pm 1.88$\\ \hline
		Rank-3 FNN   & $62.36  \pm 2.30$ & $91.50 \pm 1.23$ & $69.90  \pm 5.16$\\ \hline 
		Rank-4 FNN   & $59.68  \pm 3.97$ & $90.59  \pm 1.34$ & $71.93  \pm 1.43$\\ \hline 
		Rank-5 FNN   & $60.31  \pm 4.36$ & $92.14   \pm 1.79$ & $72.37  \pm 3.20$\\ \hline
		CNN  		 & $47.36  \pm 26.5$ & $54.03  \pm 13.2$ & $52.36 \pm 10.4$\\ \hline \hline \\
		EPOCHS=500\\ \\
		Rank-1 FNN   & $65.28  \pm 2.12$ & $93.80   \pm 0.60$ & $70.03 \pm 2.59$\\ \hline
		Rank-2 FNN   & $61.29  \pm 1.55$ & $92.32 \pm 0.83$ & $72.86   \pm 1.91$\\ \hline
		Rank-3 FNN   & $62.87 \pm 3.24$ & $92.97 \pm 1.44$ & $69.13   \pm 4.52$\\ \hline 
		Rank-4 FNN   & $60.80   \pm 4.22$ & $ 92.57 \pm 1.18$ & $74.00  \pm 1.02$\\ \hline 
		Rank-5 FNN   & $60.23   \pm 2.32$ & $93.33  \pm 0.86$ & $72.72  \pm 3.22$\\ \hline
		CNN  		 & $69.62  \pm 14.8$ & $89.99  \pm 2.81$ & $72.18  \pm 2.85$\\ \hline \hline\\
		
	\end{tabularx}
\end{table}

\begin{table}[t]
	\centering
	\caption{Number of trainable parameters for each model.}
	\newcolumntype{L}[1]{>{\hsize=#1\hsize\raggedright\arraybackslash}X}%
	\newcolumntype{C}[1]{>{\hsize=#1\hsize\centering\arraybackslash}X}%
	\label{table:2}
	
	\begin{tabularx}{\linewidth}{L{5.8}C{6.4}C{5.4}C{5.4}}
		\hline \hline 
		& Indian Pines & Botswana & Pavia Uni. \\ \hline
		
		Rank-1 FNN   & $\sim 17K$  & $\sim 13K$  & $\sim 10K$\\ \hline
		Rank-2 FNN   & $\sim 33K$  & $\sim 24K$  & $\sim 18K$\\ \hline
		Rank-3 FNN   & $\sim 49K$  & $\sim 36K$  & $\sim 26K$\\ \hline 
		Rank-4 FNN   & $\sim 64K$  & $\sim 48K$  & $\sim 35K$\\ \hline 
		Rank-5 FNN   & $\sim 80K$  & $\sim 59K$  & $\sim 43K$\\ \hline
		CNN  		 & $\sim 700K$ & $\sim 625K$ & $\sim 570K$\\ \hline \hline
	\end{tabularx}
\end{table}

\begin{table}[b]
	\centering
	\caption{The p-values for accepting the null hypothesis, $H_{Rank-R}$, that the Rank-$R$ FNN, for $R=1,2,3,4,5$, performs the same as the CNN in the case of no noise and 10-50 samples per class.}
	\newcolumntype{L}[1]{>{\hsize=#1\hsize\raggedright\arraybackslash}X}%
	\newcolumntype{C}[1]{>{\hsize=#1\hsize\centering\arraybackslash}X}%
	\label{table:3}
	
	\begin{tabularx}{\linewidth}{L{5.8}C{6.4}C{6.4}C{6.4}}
		\hline \hline 
		& Indian Pines & Botswana & Pavia Uni. \\ \hline
		Samples/Class& 10 \:\:\:\:\:\:\:\:\:\: 50 & 10 \:\:\:\:\:\:\:\:\:\: 50 & 10 \:\:\:\:\:\:\:\:\:\: 50 \\ \\
		EPOCHS=50 \\ \\
		
		$H_{Rank-1}$   & $0.072 - 0.002$ & $0.004 - 0.006$ & $0.030 - 0.001$\\ \hline
		$H_{Rank-2}$   & $0.072 - 0.001$ & $0.004 - 0.001$ & $0.023 - 0.001$\\ \hline
		$H_{Rank-3}$   & $0.072 - 0.003$ & $0.004 - 0.001$ & $0.024 - 0.001$\\ \hline 
		$H_{Rank-4}$   & $0.072 - 0.002$ & $0.005 - 0.001$ & $0.019 - 0.001$\\ \hline 
		$H_{Rank-5}$   & $0.072 - 0.002$ & $0.006 - 0.001$ & $0.015 - 0.001$\\ \hline \hline \\ 
		
		EPOCHS=500\\ \\
		
		$H_{Rank-1}$   & $0.201 - 0.001$ & $0.047 - 0.001$ & $0.297 - 0.562$\\ \hline
		$H_{Rank-2}$   & $0.148 - 0.001$ & $0.176 - 0.074$ & $0.701 - 0.541$\\ \hline
		$H_{Rank-3}$   & $0.499 - 0.001$ & $0.109 - 0.004$ & $0.292 - 0.785$\\ \hline 
		$H_{Rank-4}$   & $0.148 - 0.011$ & $0.148 - 0.008$ & $0.283 - 0.269$\\ \hline 
		$H_{Rank-5}$   & $0.072 - 0.005$ & $0.075 - 0.223$ & $0.801 - 0.628$\\ \hline \hline \\
		
	\end{tabularx}
\end{table}

\begin{figure*}[t]
	\includegraphics[scale=0.55]{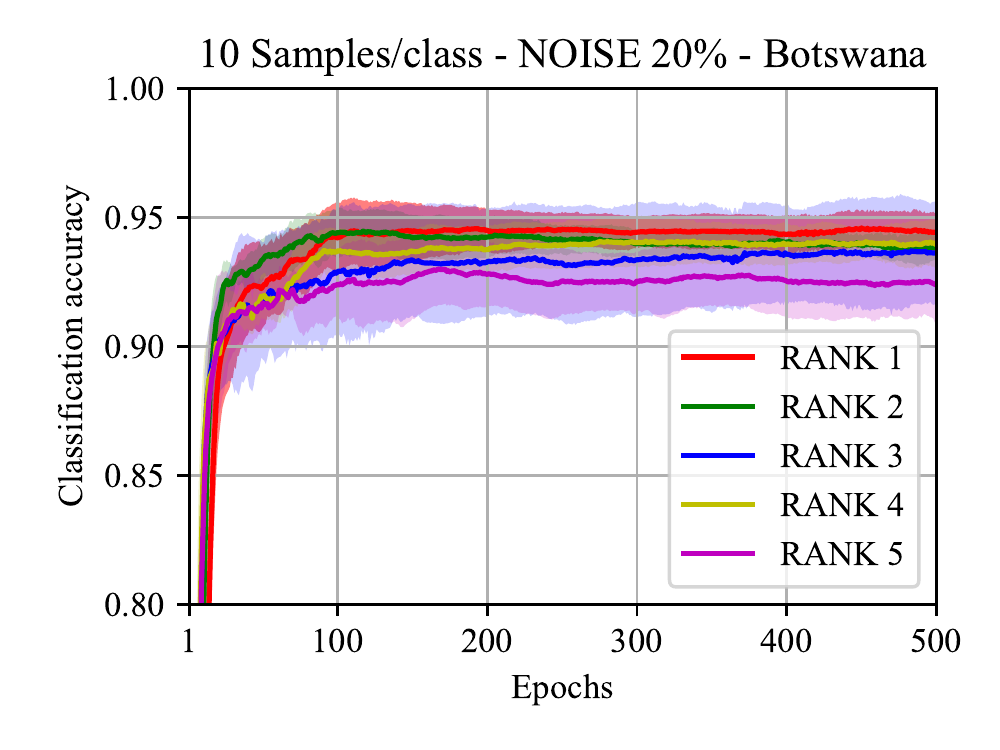} 
	\includegraphics[scale=0.55]{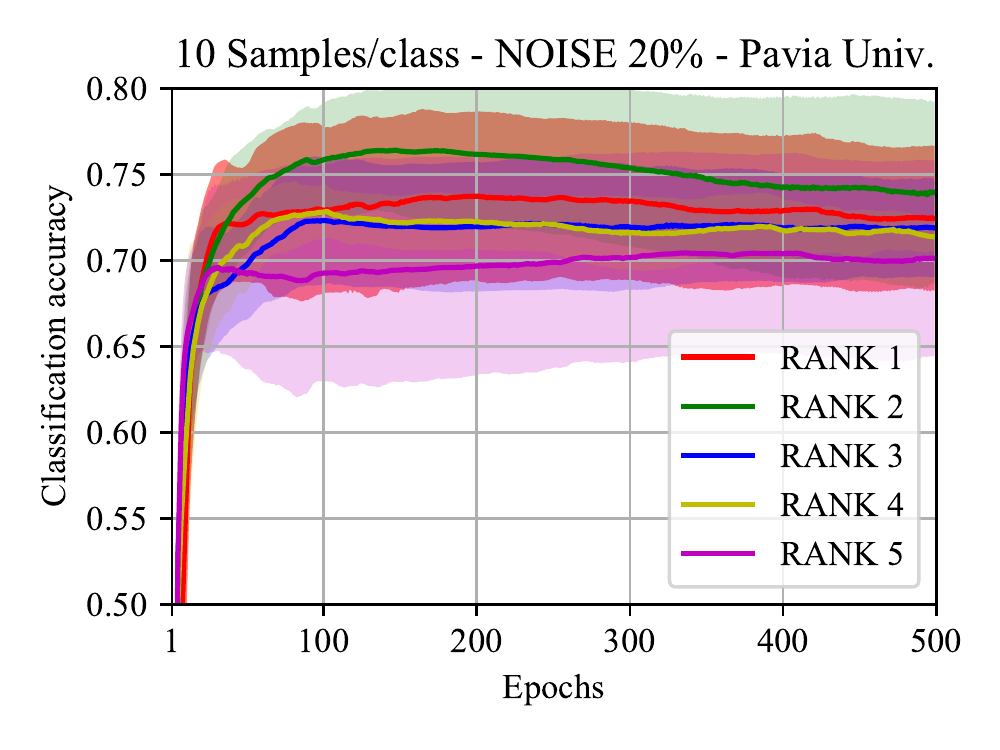} 
	\includegraphics[scale=0.55]{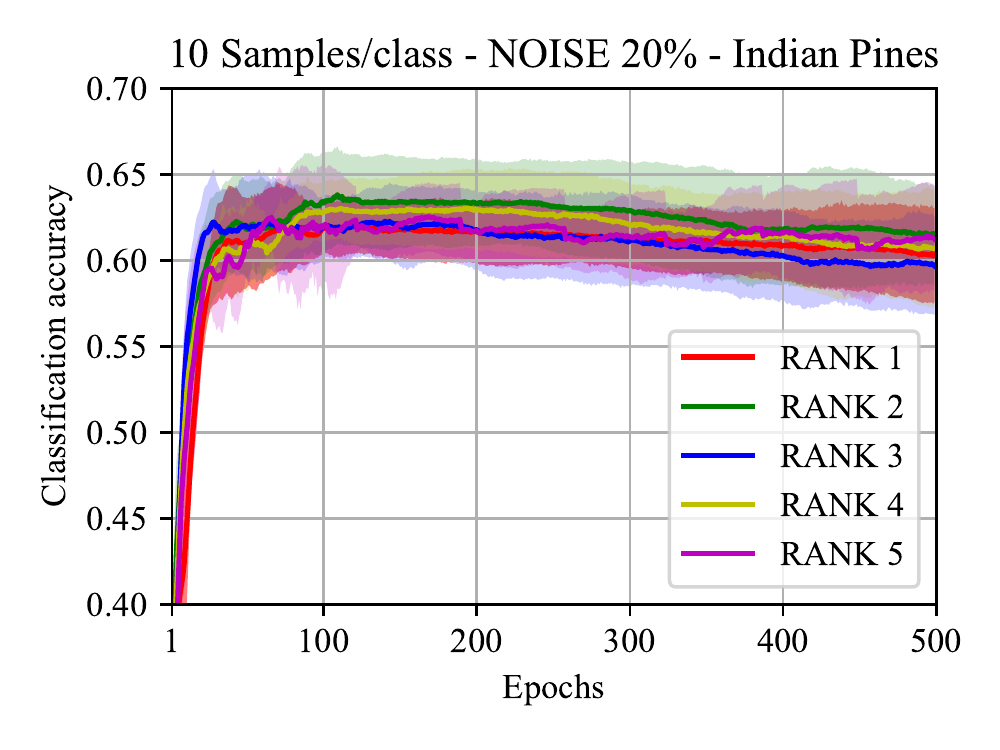}
	\caption{The performance in terms of classification accuracy and standard deviation on the test sets of the proposed Rank-$R$ model versus different epochs in case of a noise of 20\%.}
	\label{fig:RANK_NOISE}
\end{figure*}

\begin{figure*}[t]
	\hspace{0.15in}\includegraphics[scale=0.55]{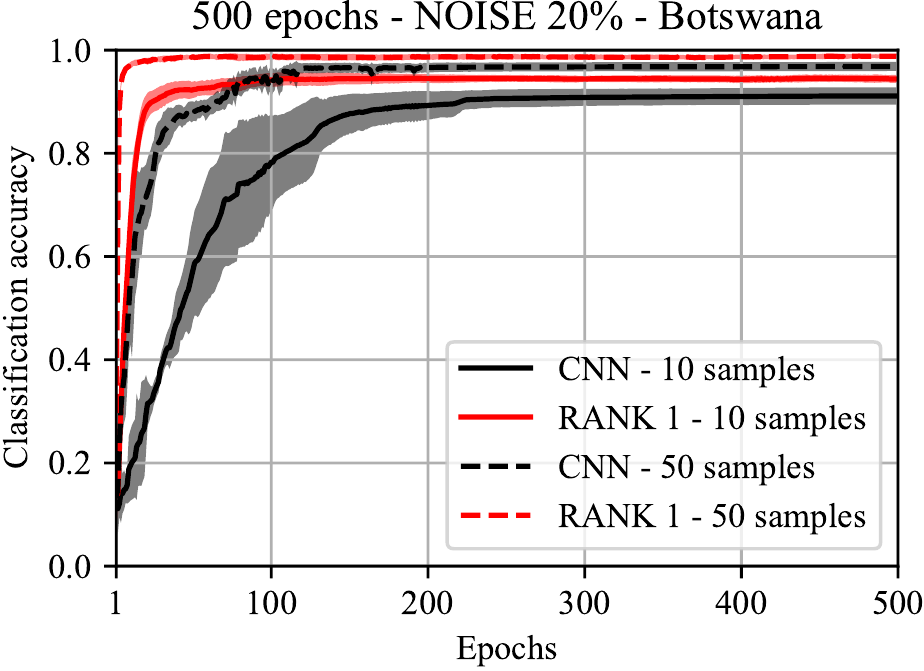} 
	\hspace{0.12in}\includegraphics[scale=0.55]{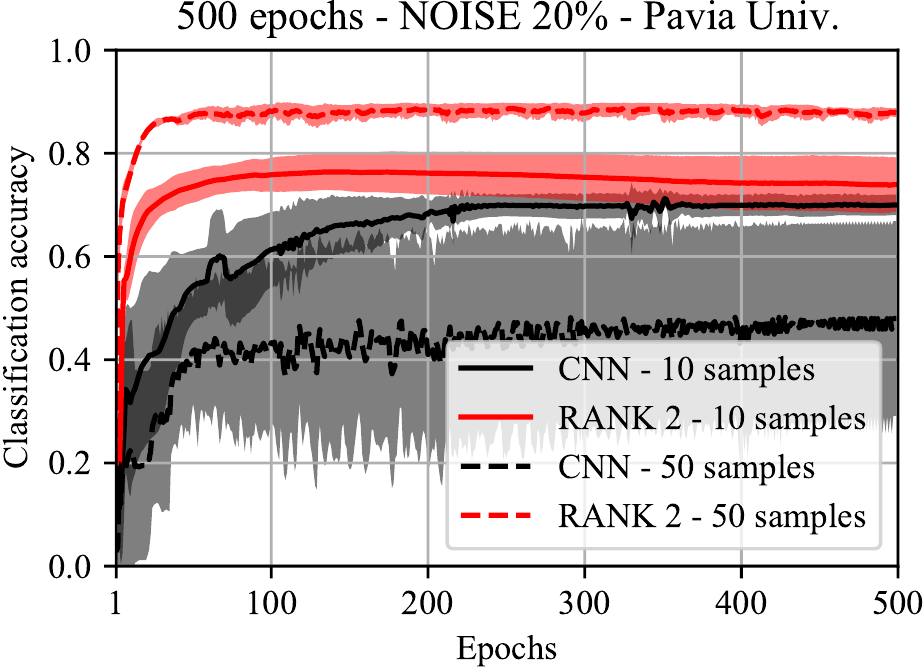} 
	\hspace{0.15in}\includegraphics[scale=0.55]{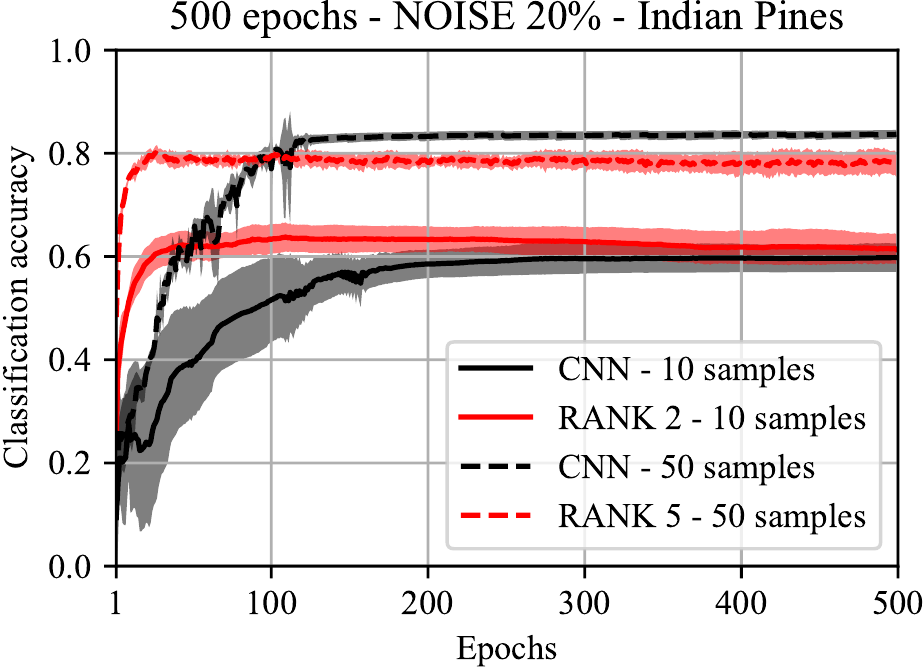}
	\caption{Comparative performance of the proposed Rank-$R$ model and the state of the art CNN-based network of \cite{makantasis2015deep}. In this figure, we assume a noise on the input data of 20\%. The comparisons have been made for $\alpha$ values of 10 and 50 respectively. In this figure, we have also depicted the standard deviation of the classification accuracy of each model. The Botswana dataset with RANK-1 since it gives the best classification accuracy (see Fig.\ref{fig:RANK_NOISE}). The Pavia University Dataset with RANK-2 since it gives the best classification accuracy (see Fig.\ref{fig:RANK_NOISE}). The Indian Pines Dataset with RANK-2 since it gives the best classification accuracy (see Fig.\ref{fig:RANK_NOISE}).}
	\label{fig:COMP_NOISE}
\end{figure*}

Finally, we conducted significance tests to check whether or not the performance of the Rank-$R$ FNN models statistically differs than the performance of the CNN. Towards this direction, we pairwise statistical tests for each dataset separately to test the null hypothesis that the CNN and each of the Rank-$R$ FNN models perform the same. First, we conducted Shapiro-Wilk tests to verify the normality of classifiers' performances. In cases where the normality assumption holds, we proceed by conducting a Levene's test to check whether the performances of the two classifiers have the same variance. Then, based on the outcome of the Levene's test, we applied the corresponding t-test to test the hypothesis that the two classifiers perform the same. In case the normality assumption is not satisfied, we proceed by conducting the non-parametric Mann-Whitney U test to test the hypothesis that the performances of the two classifiers come from the same distribution.

Based on the outcome of the t-test or the Mann-Whitney U test, we test the null hypothesis --the two models perform the same-- at the significance level 5\%. For those tests, we use the classification accuracy of the models. The sample space of the two samples (the performances of the two models that we compare) is equal to 10, that is, the number of times we repeated each experiment. We denote as $H_{Rank-R}$ the null hypothesis that the Rank-$R$ FNN models and the CNN perform the same. Table \ref{table:3} presents the results of those significance tests. For 50 epochs and 50 samples per class, all Rank-$R$ FNN models perform significantly better than the CNN on all datasets. For 50 epochs and 10 samples per class, Rank-$R$ FNN models perform significantly better than the CNN on Botswana and Pavia University datasets. For the Indian Pines, however, the difference in performance is not significant, mainly due to the large standard deviation of the CNN performance across different runs. After 500 epochs, all models have converged. For 500 epochs and 10 samples per class, the difference in models' performance cannot be considered significant, except for Rank-$1$ FNN on Botswana dataset. For 50 samples per class, Rank-$R$ FNN models (for $R=1,3,4$) perform significantly better on the Botswana dataset. For the Pavia University, the difference in performance between Rank-$R$ FNN and CNN is not significant, while for Indian Pines, CNN performs significantly better than Rank-$R$ FNN for $R=1,2,3$.  

\subsubsection{Models' performance evaluation on noisy data inputs}

In the second set of experiments, we evaluate models' performance in noisy input datasets. Figure \ref{fig:RANK_NOISE} indicates the classification accuracy on test sets versus the number of training epochs for rank $R=1,2,3,4,5$. In this figure, we have also illustrated the standard deviation of the classification accuracy over 10 different executed runs as an area of the same colour around the average line. As is observed, the best performance is achieved for $R=1$ as far as the Botswana dataset is concerned, while for the Pavia University and Indian Pines the best performance is achieved for $R=2$. Again, we observe that the proposed Rank-$R$ FNN models converge rapidly in less than 20 epochs independently of the values of $R$.

Figure \ref{fig:COMP_NOISE} depicts a comparative study of the proposed Rank-$R$ FNN model and the CNN network. This figure also illustrates the standard deviation of the classification accuracy on test sets over 10 different runs as an area around the average line. We have selected $R=1$ for the Botswana dataset, and $R=2$ for the Pavia University and Indian Pines since these rank values give the best classification accuracy (see Figure \ref{fig:RANK_NOISE}). As is observed, the proposed Rank-$R$ model presents high accuracy regardless of the noise level in its input, indicating robustness against noise. Besides, the proposed Rank-$R$ FNN model presents much lower standard deviations against different run executions than the CNN-based network of \cite{makantasis2015deep}. Also, as noisy inputs are feeding to the classification networks, the CNN model's performance more rapidly decreases compared to the proposed Rank-$R$ model. This is justified by the high representation power of CNNs, allowing them to over-fit small noise levels. 

\begin{table}[t]
	\centering
	\caption{Average classification accuracy and standard deviation (\%) on test sets in case of a 20\% noise and 10 samples per class.}
	\newcolumntype{L}[1]{>{\hsize=#1\hsize\raggedright\arraybackslash}X}%
	\newcolumntype{C}[1]{>{\hsize=#1\hsize\centering\arraybackslash}X}%
	\label{table:Noise}
	
	\begin{tabularx}{\linewidth}{L{5.8}C{6.4}C{5.4}C{5.4}}
		\hline \hline 
		& Indian Pines & Botswana & Pavia Uni. \\ \hline \\
		EPOCHS=50\\ \\
		Rank-1 FNN   & $61.10 \pm 2.54$ & $ 92.40 \pm 1.59$ & $72.21   \pm 3.46 $\\ \hline
		Rank-2 FNN   & $61.82  \pm 2.72$ & $93.44  \pm 0.77$ & $73.67\pm 3.16$\\ \hline 
		Rank-3 FNN   & $61.92  \pm 2.15$ & $91.75 \pm 2.31$ & $69.87  \pm 3.25$\\ \hline 
		Rank-4 FNN   & $60.80  \pm 2.49$ & $91.98  \pm 0.88$ & $71.07  \pm 2.66$\\ \hline 
		Rank-5 FNN   & $60.79  \pm 2.35$ & $92.63   \pm 0.96$ & $69.29  \pm 5.62$\\ \hline
		CNN  		 & $39.64  \pm 10.5$ & $58.73  \pm 10.2$ & $54.58 \pm 6.79$\\ \hline \hline \\
		EPOCHS=500\\ \\
		Rank-1 FNN   & $60.33  \pm 2.73$ & $94.42   \pm 0.74$ & $72.43 \pm 4.23$\\ \hline
		Rank-2 FNN   & $61.38  \pm 2.89$ & $93.82 \pm 0.47$ & $73.94   \pm 5.33$\\ \hline
		Rank-3 FNN   & $59.67 \pm 2.83$ & $93.62 \pm 1.99$ & $71.88   \pm 2.88$\\ \hline 
		Rank-4 FNN   & $60.70   \pm 3.46$ & $ 93.98 \pm 0.92$ & $71.36  \pm 0.78$\\ \hline 
		Rank-5 FNN   & $61.24   \pm 3.11$ & $92.41  \pm 1.48$ & $70.10  \pm 5.68$\\ \hline
		CNN  		 & $59.78  \pm 2.74$ & $91.12  \pm 1.69$ & $69.99  \pm 1.99$\\ \hline \hline\\
		
	\end{tabularx}
\end{table}

Table \ref{table:Noise} presents the results of classification accuracy on test sets of our proposed model and the CNN in case of a 20\% noise. We depict the results for 50 and 500 epochs. We observe that in all cases, the proposed Rank-$R$ model presents better classification accuracy than the CNN, though times smaller number of parameters are employed (see Table \ref{table:2}). In particular, in the Pavia University dataset, our model is $3.9\%$ better than CNN even for 500 epochs, while for 50 epochs, the improvement reaches $19.1\%$. As for Botswana, the improvement is $3.3\%$ in the case of 500 epochs and $33.9\%$ for 50 epochs. Finally, for the Indian Pines, the improvement is $1.6\%$ for 500 epochs and $22.2\%$ for 50 epochs. These numbers show that our model rapidly converges than the CNN, which constitutes another advantage of our method.

Contrary to CNN, tensor-based models seem to be noise-robust, since their performance has small variations across  different levels of noise.
The significant reduction of the number of trainable parameters, shields them against over-fitting, and their performance has small variations across different levels of noise applied on the data. Besides, our Rank-$R$ model improves classification performance compared to CNN when noise is added to the data. Thus, our model is more robust than the CNN model.

\section{Conclusions-Discussions}
\label{sec:conclusions}
In this work we present  Rank-$R$ FNN,
a tensor-based non-linear classifier
that imposes a CP decomposition on its weight parameters that connect the input layer to the first hidden layer.
Varying the rank of weights decomposition can be seen as a regularization technique that
affects the learning capacity of the model and
shows off robustness to overfitting.
We proved that Rank-$R$ FNN models
are universal approximators and form a learnable class of functions. Experiments on three publicly available  high-order hyperspectral image datasets show that the proposed model has robustness against noise especially when a small number of training samples is selected, smaller standard deviation over different execution runs and it rapidly converges with respect to the training epochs. In particular, the main conclusions are the following:

\begin{itemize}
	\item The number of the parameters of the proposed Rank-$R$ model is times smaller than the number of the respective parameters of conventional CNN networks like the one of \cite{makantasis2015deep}. In particular, the  CNN models need 41, 48 and 57  times more  parameters  compared  to  the  Rank-1  FNN  for  the  Indian  Pines,  the  Pavia  University  and  the Botswana datasets, respectively.
	\item The proposed Rank-$R$ model converges much more rapidly compared to the traditional CNN network. More specifically, our models converges in less than 20 epochs for all the examined datasets, while CNN requires more than 200 epochs for its convergence. 
	\item The standard deviation of the classification accuracy of our model is much smaller than the one achieved by the use of the CNN over different execution runs. Particularly, the average reduction of the standard deviation is on average $3.8$ times over all datasets for the noise-free examples and on average $20\%$ better in case of noisy data inputs.
	\item The proposed Rank-$R$ model is robust against noisy input data retaining both convergence rate efficiency, (less than 20 epochs and classification accuracy. Indeed, in case of adding noise, our model outperforms CNN for all datasets and when convergence of 500 epochs is achieved. Moreover, the improvement of our model to the CNN increases as more noise is added to the input data. 
	\item The classification accuracy of the proposed model is statistically better when a small number of training samples per class are selected (that is 10 samples for 20 epochs), while the improvement decreases for large training epochs of 500; for noise-free data, in two out of the three datasets, the proposed Rank-$R$ model slightly outperforms the CNN model by $3.8\%$ and $1.8\%$ for Botswana and Pavia University datasets, while in the Indian Pines dataset is slightly worse by $4.3\%$. For noisy data, and particularly when a noise of $20\%$ is added, our model outperforms CNN for all the three examined datasets, that is, it is $3.9\%$ better for the  Pavia  University dataset, $3.3\%$ better for the Botswama dataset and $1.6\%$ for the Indian Pines dataset.   
	
\end{itemize} 

\appendices
\section*{Appendix A}
\begin{proof}[Proof of Rank-$R$ FNN Theorem]
	We have to show that for
	any $f$
	there exists $g$ such that $g(\bm A) = f(\bm A)$,
	for any tensor object $\bm A \in \mathbb R^{p_1\times\cdots\times p_D}$.
	Functions $f$ and $g$ can be written as
	\begin{equation}
		f = f_2(f_1) \:\:\: \text{and} \:\:\: g = g_2(g_1), \nonumber
	\end{equation}
	where $f_1$ and $g_1$ represent the output of the hidden layer,
	and
	functions $f_2$ and $g_2$ map the hidden layer outputs to the output layer. Since these two networks have the same number of hidden neurons we can set $g_2 = f_2$.
	In order to complete the proof, it suffices to show that
	there exists $g_1$ such that $g_1(\bm A) = f_1(\bm A)$.
	
	Functions $f_1$ and $g_1$ are vector functions, that is
	\begin{equation}
		f_1 = [f_1^1, f_1^2, \cdots f_1^Q]^T \:\:\: \text{and} \:\:\: g_1 = [g_1^1, g_1^2, \cdots g_1^Q]^T. \nonumber
	\end{equation}
	Without loss of generality assume that
	$f_1^q=\sigma$ for all $q \in \{1, \dots, Q\}$,
	where $\sigma$
	is the sigmoid activation function.
	Under the previous assumption,
	we can set $g_1^q=\sigma$ for all $q$.
	
	From now on,
	with an abuse of notation,
	we remove the superscript $q$ and refer
	to  hidden units $f$ and $g$.
	What remains to show is that,
	for any $\bm w$, there exist decomposition
	$
	\sum_{r=1}^R \bm b_1^{(r)}\circ \cdots \bm b_D^{(r)}
	$
	such that
	\begin{equation}
		\sigma(\langle \bm w, \bm A \rangle) \:\:\:  = \sigma(\langle \sum_{r=1}^R \bm b_1^{(r)}\circ \cdots \bm b_D^{(r)}, \bm A \rangle) \nonumber
	\end{equation}
	But  any $\bm w \in \mathbb{R}^{\prod_1^D p_d}$
	admits a rank $R$
	decomposition
	for some finite $R$,
	i.e.,
	\begin{equation}
		\label{eq:equal}
		\text{ten}(\bm w) = \sum_{r=1}^R \bm b_1^{(r)}\circ \cdots \bm b_D^{(r)},
	\end{equation}
	or equivalently
	\begin{equation}
		\text{vec}(\sum_{r=1}^R \bm b_1^{(r)}\circ \cdots \bm b_D^{(r)}) = \bm w. \nonumber
	\end{equation}
	Equation (\ref{eq:equal}) holds for every tensor $\text{ten}(\bm w) \in \mathbb R^{p_1 \times \cdots \times p_D}$,
	if its rank, let's say $R'$,
	is less than or equal to $R$.
	An upper bound for the rank $R'$ of such  tensor $\text{ten}(\bm w)$ is the following:
	\begin{equation}
		R' \leq \min_i P_i \:\:\:\text{with} \:\:\ P_i = \prod_{d\neq i}p_d. \nonumber
	\end{equation}
	Therefore, for $R \geq \min_i P_i$, there exist $\bm b_d^{(r)}$ for $d=1,\cdots, D$, $r=1,\cdots,R$ such that equation (\ref{eq:equal}) holds. The above arguments hold for every $q=1,\cdots,Q$, which implies that there exists $g_1$ such that $g_1(\bm A) = f_1(\bm A)$ for any $\bm A$.
\end{proof}

\ifCLASSOPTIONcompsoc
  \section*{Acknowledgments}
\else
  \section*{Acknowledgment}
\fi

This paper is supported by the European Union funded project HYPERION "Development of a Decision Support System for Improved Resilience \& Sustainable Reconstruction of historic areas to cope with Climate Change \& Extreme Events based on Novel Sensors and Modelling Tools," with grant agreement 821054, the project Panoptis "Development of a Decision Support System for increasing the Resilience of Transportation Infrastructure based on combined use of terrestrial and airborne sensors and advanced modelling tools' with grant agreement 769129 both funded under the Horizon 2020 research program. This work has also been supported by the European Union's Horizon 2020 research and innovation programme from the TAMED project with Grant Agreement No. 101003397.

\ifCLASSOPTIONcaptionsoff
  \newpage
\fi


\begin{thebibliography}{10}
	\providecommand{\url}[1]{#1}
	\csname url@samestyle\endcsname
	\providecommand{\newblock}{\relax}
	\providecommand{\bibinfo}[2]{#2}
	\providecommand{\BIBentrySTDinterwordspacing}{\spaceskip=0pt\relax}
	\providecommand{\BIBentryALTinterwordstretchfactor}{4}
	\providecommand{\BIBentryALTinterwordspacing}{\spaceskip=\fontdimen2\font plus
		\BIBentryALTinterwordstretchfactor\fontdimen3\font minus
		\fontdimen4\font\relax}
	\providecommand{\BIBforeignlanguage}[2]{{%
			\expandafter\ifx\csname l@#1\endcsname\relax
			\typeout{** WARNING: IEEEtran.bst: No hyphenation pattern has been}%
			\typeout{** loaded for the language `#1'. Using the pattern for}%
			\typeout{** the default language instead.}%
			\else
			\language=\csname l@#1\endcsname
			\fi
			#2}}
	\providecommand{\BIBdecl}{\relax}
	\BIBdecl
	
	\bibitem{doulamis20155d}
	A.~Doulamis, N.~Grammalidis, M.~Ioannides, C.~Potsiou, N.~D. Doulamis, E.~K.
	Stathopoulou, C.~Ioannidis, C.~Chrysouli, and K.~Dimitropoulos, ``5d
	modelling: an efficient approach for creating spatiotemporal predictive 3d
	maps of large-scale cultural resources,'' \emph{ISPRS Annals of the
		Photogrammetry, Remote Sensing and Spatial Information Sciences}, 2015.
	
	\bibitem{qin2021small}
	P.~Qin, Y.~Cai, and X.~Wang, ``Small waterbody extraction with improved u-net
	using zhuhai-1 hyperspectral remote sensing images,'' \emph{IEEE Geoscience
		and Remote Sensing Letters}, 2021.
	
	\bibitem{lee2014applicability}
	I.~Lee and C.-U. Hyun, ``Applicability of hyperspectral imaging technology for
	the check of cadastre's land category,'' \emph{Journal of the Korean Society
		of Surveying, Geodesy, Photogrammetry and Cartography}, vol.~32, no. spc4\_2,
	pp. 421--430, 2014.
	
	\bibitem{zadeh2017tensor}
	A.~Zadeh, M.~Chen, S.~Poria, E.~Cambria, and L.-P. Morency, ``Tensor fusion
	network for multimodal sentiment analysis,'' in \emph{Proceedings of the 2017
		Conference on Empirical Methods in Natural Language Processing}, 2017, pp.
	1103--1114.
	
	\bibitem{makantasis2015tunnel}
	K.~Makantasis, E.~Protopapadakis, A.~Doulamis, N.~Doulamis, and K.~Loupos,
	``Deep convolutional neural networks for efficient vision based tunnel
	inspection,'' in \emph{Intelligent Computer Communication and Processing
		(ICCP), 2015 IEEE International Conference on}.\hskip 1em plus 0.5em minus
	0.4em\relax IEEE, 2015, pp. 335--342.
	
	\bibitem{nikitakis2019unified}
	A.~Nikitakis, K.~Makantasis, N.~Tampouratzis, and I.~Papaefstathiou, ``A
	unified novel neural network approach and a prototype hardware implementation
	for ultra-low power eeg classification,'' \emph{IEEE transactions on
		biomedical circuits and systems}, vol.~13, no.~4, pp. 670--681, 2019.
	
	\bibitem{cichocki2017tensor}
	A.~Cichocki, A.-H. Phan, Q.~Zhao, N.~Lee, I.~Oseledets, M.~Sugiyama, D.~P.
	Mandic \emph{et~al.}, ``Tensor networks for dimensionality reduction and
	large-scale optimization: Part 2 applications and future perspectives,''
	\emph{Foundations and Trends{\textregistered} in Machine Learning}, vol.~9,
	no.~6, pp. 431--673, 2017.
	
	\bibitem{zhou2016linked}
	G.~Zhou, Q.~Zhao, Y.~Zhang, T.~Adal{\i}, S.~Xie, and A.~Cichocki, ``Linked
	component analysis from matrices to high-order tensors: Applications to
	biomedical data,'' \emph{Proceedings of the IEEE}, vol. 104, no.~2, pp.
	310--331, 2016.
	
	\bibitem{camps2005kernel}
	G.~Camps-Valls and L.~Bruzzone, ``Kernel-based methods for hyperspectral image
	classification,'' \emph{IEEE Trans. on Geoscience and Remote Sensing},
	vol.~43, no.~6, pp. 1351--1362, 2005.
	
	\bibitem{li2018tucker}
	X.~Li, D.~Xu, H.~Zhou, and L.~Li, ``Tucker tensor regression and neuroimaging
	analysis,'' \emph{Statistics in Biosciences}, vol.~10, no.~3, pp. 520--545,
	2018.
	
	\bibitem{zhou2013tensor}
	H.~Zhou, L.~Li, and H.~Zhu, ``Tensor regression with applications in
	neuroimaging data analysis,'' \emph{Journal of the American Statistical
		Association}, vol. 108, no. 502, pp. 540--552, 2013.
	
	\bibitem{makantasis2015deep}
	K.~Makantasis, K.~Karantzalos, A.~Doulamis, and N.~Doulamis, ``Deep supervised
	learning for hyperspectral data classification through convolutional neural
	networks,'' in \emph{IEEE Intern. Geoscience and Remote Sensing Symposium
		(IGARSS)}.\hskip 1em plus 0.5em minus 0.4em\relax IEEE, 2015, pp. 4959--4962.
	
	\bibitem{protopapadakis2021stacked}
	E.~Protopapadakis, A.~Doulamis, N.~Doulamis, and E.~Maltezos, ``Stacked
	autoencoders driven by semi-supervised learning for building extraction from
	near infrared remote sensing imagery,'' \emph{Remote Sensing}, vol.~13,
	no.~3, p. 371, 2021.
	
	\bibitem{kolda2009tensor}
	T.~Kolda and B.~Bader, ``Tensor decompositions and applications,'' \emph{SIAM
		review}, vol.~51, no.~3, pp. 455--500, 2009.
	
	\bibitem{lu2008mpca}
	H.~Lu, K.~N. Plataniotis, and A.~N. Venetsanopoulos, ``Mpca: Multilinear
	principal component analysis of tensor objects,'' \emph{IEEE Trans. on Neural
		Networks}, vol.~19, no.~1, pp. 18--39, 2008.
	
	\bibitem{chu2009probabilistic}
	W.~Chu and Z.~Ghahramani, ``Probabilistic models for incomplete
	multi-dimensional arrays,'' in \emph{Artificial Intelligence and Statistics},
	2009, pp. 89--96.
	
	\bibitem{rai2014scalable}
	P.~Rai, Y.~Wang, S.~Guo, G.~Chen, D.~Dunson, and L.~Carin, ``Scalable bayesian
	low-rank decomposition of incomplete multiway tensors,'' in
	\emph{International Conference on Machine Learning}, 2014, pp. 1800--1808.
	
	\bibitem{xu2015bayesian}
	Z.~Xu, F.~Yan, and Y.~Qi, ``Bayesian nonparametric models for multiway data
	analysis,'' \emph{IEEE transactions on pattern analysis and machine
		intelligence}, vol.~37, no.~2, pp. 475--487, 2015.
	
	\bibitem{makantasis2019common}
	K.~{Makantasis}, A.~{Doulamis}, N.~{Doulamis}, and A.~{Voulodimos}, ``Common
	mode patterns for supervised tensor subspace learning,'' in \emph{ICASSP 2019
		- 2019 IEEE International Conference on Acoustics, Speech and Signal
		Processing (ICASSP)}, May 2019, pp. 2927--2931.
	
	\bibitem{tao2007general}
	D.~Tao, X.~Li, X.~Wu, and S.~J. Maybank, ``General tensor discriminant analysis
	and gabor features for gait recognition,'' \emph{IEEE Transactions on Pattern
		Analysis and Machine Intelligence}, vol.~29, no.~10, 2007.
	
	\bibitem{tan2012logistic}
	X.~Tan, Y.~Zhang, S.~Tang, J.~Shao, F.~Wu, and Y.~Zhuang, ``Logistic tensor
	regression for classification,'' in \emph{Intern. Conf. on Intelligent
		Science and Intelligent Data Engineering}.\hskip 1em plus 0.5em minus
	0.4em\relax Springer, 2012, pp. 573--581.
	
	\bibitem{li2014multilinear}
	Q.~Li and D.~Schonfeld, ``Multilinear discriminant analysis for higher-order
	tensor data classification,'' \emph{IEEE transactions on pattern analysis and
		machine intelligence}, vol.~36, no.~12, pp. 2524--2537, 2014.
	
	\bibitem{hoff2015multilinear}
	P.~D. Hoff, ``Multilinear tensor regression for longitudinal relational data,''
	\emph{The annals of applied statistics}, vol.~9, no.~3, p. 1169, 2015.
	
	\bibitem{makantasis2018icassp}
	K.~Makantasis, A.~Doulamis, N.~Doulamis, A.~Nikitakis, and A.~Voulodimos,
	``Tensor-based nonlinear classifier for high-order data analysis,'' in
	\emph{2018 IEEE Intern. Conf. on Acoustics, Speech and Signal Processing
		(ICASSP)}.\hskip 1em plus 0.5em minus 0.4em\relax IEEE, April 2018, pp.
	2221--2225.
	
	\bibitem{makantasis2018tensor}
	K.~Makantasis, A.~Doulamis, N.~Doulamis, and A.~Nikitakis, ``Tensor-based
	classification models for hyperspectral data analysis,'' \emph{IEEE Trans. on
		Geoscience and Remote Sensing}, no.~99, pp. 1--15, 2018.
	
	\bibitem{kossaifi2017tensor}
	J.~Kossaifi, Z.~C. Lipton, A.~Khanna, T.~Furlanello, and A.~Anandkumar,
	``Tensor regression networks,'' \emph{arXiv preprint arXiv:1707.08308}, 2017.
	
	\bibitem{novikov2015tensorizing}
	A.~Novikov, D.~Podoprikhin, A.~Osokin, and D.~P. Vetrov, ``Tensorizing neural
	networks,'' in \emph{Advances in neural information processing systems},
	2015, pp. 442--450.
	
	\bibitem{lebedev2014speeding}
	V.~Lebedev, Y.~Ganin, M.~Rakhuba, I.~Oseledets, and V.~Lempitsky, ``Speeding-up
	convolutional neural networks using fine-tuned cp-decomposition,''
	\emph{arXiv preprint arXiv:1412.6553}, 2014.
	
	\bibitem{garipov2016ultimate}
	T.~Garipov, D.~Podoprikhin, A.~Novikov, and D.~Vetrov, ``Ultimate
	tensorization: compressing convolutional and fc layers alike,'' \emph{arXiv
		preprint arXiv:1611.03214}, 2016.
	
	\bibitem{khrulkov2017expressive}
	V.~Khrulkov, A.~Novikov, and I.~Oseledets, ``Expressive power of recurrent
	neural networks,'' \emph{arXiv preprint arXiv:1711.00811}, 2017.
	
	\bibitem{cohen2016expressive}
	N.~Cohen, O.~Sharir, and A.~Shashua, ``On the expressive power of deep
	learning: A tensor analysis,'' in \emph{Conference on Learning Theory}, 2016,
	pp. 698--728.
	
	\bibitem{makantasis2019hyperspectral}
	K.~Makantasis, A.~Voulodimos, A.~Doulamis, N.~Doulamis, and I.~Georgoulas,
	``Hyperspectral image classification with tensor-based rank-r learning
	models,'' in \emph{2019 IEEE International Conference on Image Processing
		(ICIP)}.\hskip 1em plus 0.5em minus 0.4em\relax IEEE, 2019, pp. 3148--3125.
	
	\bibitem{shalev2014understanding}
	S.~Shalev-Shwartz and S.~Ben-David, \emph{Understanding machine learning: From
		theory to algorithms}.\hskip 1em plus 0.5em minus 0.4em\relax Cambridge
	university press, 2014.
	
	\bibitem{cybenko1989approximation}
	G.~Cybenko, ``Approximation by superpositions of a sigmoidal function,''
	\emph{Mathematics of control, signals and systems}, vol.~2, no.~4, pp.
	303--314, 1989.
	
	\bibitem{hornik1990universal}
	K.~Hornik, M.~Stinchcombe, and H.~White, ``Universal approximation of an
	unknown mapping and its derivatives using multilayer feedforward networks,''
	\emph{Neural networks}, vol.~3, no.~5, pp. 551--560, 1990.
	
	\bibitem{blumer1989learnability}
	A.~Blumer, A.~Ehrenfeucht, D.~Haussler, and M.~K. Warmuth, ``Learnability and
	the vapnik-chervonenkis dimension,'' \emph{Journal of the ACM (JACM)},
	vol.~36, no.~4, pp. 929--965, 1989.
	
	\bibitem{jouni2019hyperspectral}
	M.~Jouni, M.~Dalla~Mura, and P.~Comon, ``Hyperspectral image classification
	using tensor cp decomposition,'' in \emph{IGARSS 2019-2019 IEEE International
		Geoscience and Remote Sensing Symposium}.\hskip 1em plus 0.5em minus
	0.4em\relax IEEE, 2019, pp. 1164--1167.
	
	\bibitem{chen2014deep}
	Y.~Chen, Z.~Lin, X.~Zhao, G.~Wang, and Y.~Gu, ``Deep learning-based
	classification of hyperspectral data,'' \emph{IEEE Journal of Selected topics
		in applied earth observations and remote sensing}, vol.~7, no.~6, pp.
	2094--2107, 2014.
	
\end{thebibliography}
\end{document}